\newenvironment{proof}{\paragraph{Proof:}}{\hfill$\square$}
\newtheorem{thm}{Theorem}[section]
\newtheorem{prop}[thm]{Proposition}
\newtheorem{lemma}[thm]{Lemma}
\newtheorem{dfn}{Definition}[section]
\newcommand{\OO}{\mathcal{O}}
\newcommand{\data}{\mathcal{D}}
\newcommand{\R}{\mathbb{R}}
\newcommand{\E}{\mathbb{E}}
\renewcommand{\P}{\mathbb{P}}
\newcommand{\ind}{\mathbbm{1}}
\newcommand{\eqdef}{:=}
\newcommand{\dd}{\mathrm{d}}
\newcommand{\Unif}{\mathcal{U}([0,1])}
\newcommand{\N}{\mathcal{N}}
\newcommand{\B}{\mathcal{B}}
\newcommand{\prior}{\pi_{\xi}(\theta)} 
\newcommand{\postarg}[2]{\pi_{\xi,#1}(#2|\data)}
\newcommand{\post}{\postarg{\gamma}{\theta}} 
\newcommand{\postt}{\postarg{\gamma_t}{\theta}} 
\newcommand{\posttm}{\postarg{\gamma_{t-1}}{\theta}} 
\newcommand{\ev}{Z_{\xi,\gamma}(\data)} 
\newcommand{\np}{n_{+}}
\newcommand{\nm}{n_{-}}
\newcommand{\postsl}{\postarg{\gamma}{\theta,z}}
\newcommand{\postGP}{\pi_{\xi,\gamma}(\dd s|\data)}
\newcommand{\priorGP}{\pi_{\xi}(\dd s)}
\newcommand{\datap}{\data_{+}}
\newcommand{\datam}{\data_{-}}
\newcommand{\postGPm}{\pi_{\xi,\gamma}(s_{1:n}|\data)}
\newcommand{\priorGPm}{\N_d\left(s_{1:n};0,K_\xi\right)}
\title{PAC-Bayesian AUC classification and scoring}
\author{
James Ridgway\thanks{\url{http://www.crest.fr/pagesperso.php?user=3328}} \\
CREST and CEREMADE University Dauphine\\
\texttt{james.ridgway@ensae.fr} 
\And
Pierre Alquier \\
CREST (ENSAE) \\
\texttt{pierre.alquier@ucd.ie} 
\And
Nicolas Chopin \\
CREST (ENSAE) and HEC Paris \\
\texttt{nicolas.chopin@ensae.fr} 
\And
Feng Liang \\
University of Illinois at Urbana-Champaign\\
\texttt{liangf@illinois.edu} 
}
\begin{document}

\maketitle

\begin{abstract}
We develop a scoring and classification procedure based on the PAC-Bayesian
approach and the AUC (Area Under Curve) criterion. We focus initially on the class of 
linear score functions. We derive PAC-Bayesian non-asymptotic bounds for two types of 
prior for the score parameters: a Gaussian prior, and a spike-and-slab
prior; the latter makes it possible to perform feature selection. 
One important advantage of our approach is that it is amenable to powerful
Bayesian computational tools. We derive in particular a Sequential Monte Carlo algorithm, as an efficient method which may be used as a gold standard, 
and an Expectation-Propagation algorithm, as a much faster but approximate method. 
We also extend our method to a class of non-linear score functions,
essentially leading to a nonparametric procedure, by considering a Gaussian process 
prior. 
\end{abstract}

\section{Introduction}

Bipartite ranking (scoring) amounts to rank (score) data from binary labels. 
An important problem in its own right, bipartite ranking is also an elegant way to 
formalise classification: once a score function has been estimated from the data, 
classification reduces to chooses a particular threshold, which determine to which
class is assigned each data-point, according to whether its score is above or below that
threshold. It is convenient to choose that threshold only once the score has
been estimated, so as to get finer control of the false negative and false positive rates; 
this is easily achieved by plotting the ROC (Receiver operating characteristic) curve. 

A standard optimality criterion for scoring is AUC (Area Under Curve), which measures
the area under the ROC curve. AUC is appealing for at least two reasons. First,
maximising AUC is equivalent to minimising the $L_1$ distance between the estimated
score and the optimal score. Second, under mild conditions, \cite{Cortes2003} show that 
AUC for a score $s$ equals the probability that $s(X^-)<s(X^+)$ for $X^-$ (resp. $X^+$) 
a random draw from the negative (resp. positive class). \cite{Yanetal2003} observed
AUC-based classification handles much better skewed classes (say the positive class is 
much larger than the other) than standard classifiers, because it enforces
a small score for all members of the negative class (again assuming the negative class
is the smaller one). 

One practical issue with AUC maximisation is that the empirical version of AUC is not a continuous function. One way to address this problem is to "convexify" this function, 
and study the properties of so-obtained estimators \citep{Clemencon2008a}. 
We follow instead the PAC-Bayesian approach in this paper, which consists of
using a random estimator sampled from a pseudo-posterior distribution that
penalises exponentially the (in our case) AUC risk. It is well known
\cite[see e.g. the monograph of][]{Catoni2007} that the PAC-Bayesian approach
comes with a set of powerful technical tools to establish non-asymptotic 
bounds; the first part of the paper derive such bounds. 
A second advantage however of this approach, as we show in the second part of the
paper, is that it is amenable to powerful Bayesian computational tools,
such as Sequential Monte Carlo and Expectation Propagation. 

\section{Theoretical bounds from the PAC-Bayesian Approach}\label{sec:theory}

\subsection{Notations}
The data $\data$ consist in the realisation of $n$ IID (independent and identically
distributed) pairs $(X_i,Y_i)$ with distribution $P$, and taking values in $\R^d\times\{-1,1\}$. Let 
$\np = \sum_{i=1}^n \ind\{Y_i=+1\}$, $\nm=n-n_+$. 
For a score function $s:\R^d \rightarrow \R$, the
AUC risk and its empirical counter-part may be defined as: 
\begin{align*}
 R(s) & = \P_{(X,Y),(X',Y')\sim P}\left[
\{s(X)-s(X')\}(Y-Y')<0
 \right], \\ 
 R_n (s) & = \frac{1}{n(n-1)} \sum_{i\neq j} \ind\left[ \{s(X_i)-s(X_j)\}
 (Y_i-Y_j)<0\right].
\end{align*}
Let $\sigma(x)=\E(Y|X=x)$, $\bar{R}=R(\sigma)$ and $\bar{R}_n=R_n(\sigma)$. 
It is well known that $\sigma$ is the score that minimise $R(s)$, i.e.
$R(s)\geq \bar{R}=R(\sigma)$ for any score $s$.

The results of this section apply to the class of linear scores, $s_\theta(x)=\left<\theta,x\right>$, where 
$\left<\theta,x\right>= \theta^T x$ denotes the inner product. Abusing notations, 
let $R(\theta)=R(s_\theta)$, $R_n(\theta)=R_n(s_\theta)$, and,
for a given prior density $\pi_\xi(\theta)$ that may depend on some hyperparameter
$\xi\in\Xi$, 
define the Gibbs posterior density (or pseudo-posterior) as 
$$ \post \eqdef 
\frac{\pi_\xi(\theta)\exp\left\{-\gamma R_n(\theta)\right\}}
{\ev},
\quad
\ev = \int_{R^d}\pi_\xi(\tilde{\theta})\exp\left\{-\gamma R_n(\tilde{\theta})\right\}\,\dd\tilde{\theta}
$$
for $\gamma>0$. Both the prior and posterior densities are defined with
respect to the Lebesgue measure over $\R^d$.

\subsection{Assumptions and general results}

Our general results require the following assumptions. 

\begin{dfn}
 We say that Assumption {\bf Dens}$(c)$ is satisfied for $c>0$ if
 $$ \mathbb{P}( \left<X_1 - X_2,\theta\right> \geq 0,
  \left<X_1-X_2,\theta'\right> \leq 0) \leq c \|\theta - \theta'\| $$
 for any $\theta$ and $\theta'\in\R^d$ such that $\|\theta\|=\|\theta'\|=1$.
\end{dfn}

This is a mild Assumption, which holds for instance
as soon as $(X_1-X_2)/\|X_1-X_2\|$ 
admits a bounded probability density; see the appendix. 

\begin{dfn}[Mammen \& Tsybakov margin assumption]
 We say that Assumption {\bf MA}$(\kappa,C)$ is satisfied for $\kappa\in[1,+\infty]$
 and $C\geq 1$ if
 $$
 \mathbb{E}\left[(q^{\theta}_{1,2})^2\right] \leq C \left[
 R(\theta)-\overline{R}\right]^{\frac{1}{\kappa}}
 $$
where $q^{\theta}_{i,j}=\ind\{\left<\theta,X_i - X_j\right>
 (Y_i-Y_j)<0\} - \ind\{[\sigma(X_i)-\sigma(X_j)]
 (Y_i-Y_j)<0\} - R(\theta)+\overline{R}$.
\end{dfn}

This assumption was introduced for classification by \cite{Mammen1999}, and
used for ranking by \cite{Clemencon2008} and \cite{Robbiano2013} (see also a
nice discussion in~\cite{Lecue}). 
The larger $\kappa$, the less restrictive {\bf MA}$(\kappa,C)$. In fact, 
{\bf MA}$(\infty,C)$ is always satisfied for $C=4$. 
For a noiseless classification task (i.e. $\sigma(X_i) Y_i \geq 0$ almost surely),
$\overline{R}=0$, 
\begin{align*}
  \mathbb{E}((q^{\theta}_{1,2})^2)  = {\rm Var}(q^{\theta}_{1,2}) 
  &  =  \mathbb{E}[\ind\{\left<\theta,X_1 - X_2\right>
 (Y_i-Y_j)<0\}] = R(\theta)-\overline{R}
\end{align*}
and {\bf MA}$(1,1)$ holds.
More generally, {\bf MA}$(1,C)$ is satisfied as soon as the noise is small; see the discussion in
Robiano 2013 (Proposition 5 p. 1256) for a formal statement.
From now, we focus on either {\bf MA}$(1,C)$
or {\bf MA}$(\infty,C)$, $C\geq 1$. 
It is possible
to prove convergence under {\bf MA}$(\kappa,1)$ for a general $\kappa\geq 1$,
but at the price of complications regarding the
choice of $\gamma$; see \cite{Catoni2007}, \cite{Alquier2008} and \cite{Robbiano2013}.

We use the classical PAC-Bayesian methodology initiated by
\cite{Shawe-Taylor1997, McAllester1998} (see
\cite{Catoni2007,Alquier2008} for a complete survey and more recent advances)
to get the following results. Proof of these and forthcoming
results may be found in the appendix. Let $\mathcal{K}(\rho,\pi)$ denotes the Kullback-Liebler divergence,
$\mathcal{K}(\rho,\pi)=\int \rho(\dd\theta) \log\{\frac{\dd\rho}{\dd \pi}(\theta)\}$ if $\rho<<\pi$, $\infty$ otherwise, and 
denote $ \mathcal{M}_+^1$ the set of probability distributions $\rho(\dd\theta)$.

\begin{lemma}
\label{lemma-pacbayes}
Assume that {\bf MA}$(1,C)$ holds with $C\geq 1$.
For any fixed $\gamma$ with $0<\gamma\leq (n-1)/(8C)$,
for any $\varepsilon>0$, with probability at least $1-\varepsilon$
on the drawing of the data $\data$,
$$
\int R(\theta) \post\dd\theta - \overline{R}
\leq 2
\inf_{\rho \in \mathcal{M}_+^1} \left\{
\int R(\theta) \rho({\rm d}\theta)- \overline{R}
+ 2\frac{\mathcal{K}(\rho,\pi) + \log\left(
 \frac{4}{\varepsilon}\right)}{\gamma}
\right\}. 
$$	
\end{lemma}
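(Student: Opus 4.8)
The plan is to follow the standard PAC-Bayesian recipe, but adapted to the U-statistic structure of $R_n$. The starting point is a concentration/deviation bound for $R_n(\theta)-\overline{R}_n$ relative to $R(\theta)-\overline{R}$, uniformly over posteriors via a change of measure. Concretely, I would introduce the centered quantity $q^\theta_{i,j}$ (already defined in the statement of {\bf MA}) and note that $R_n(\theta)-\overline{R}_n - (R(\theta)-\overline{R})$ is a U-statistic of order $2$ in the $q^\theta_{i,j}$. The first key step is to control the exponential moment $\E \exp\{\lambda [ (R(\theta)-\overline{R}) - (R_n(\theta)-\overline{R}_n)]\}$ for a suitable $\lambda$ proportional to $\gamma$. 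Here the standard trick for U-statistics (Hoeffding's decomposition of a U-statistic as an average of averages of independent blocks, as used by \cite{Clemencon2008}) reduces the problem to an i.i.d.\ sum; then a Bernstein-type argument, using boundedness of the $q^\theta_{i,j}$ (they lie in $[-1,1]$) together with the variance control $\E[(q^\theta_{1,2})^2]\le C[R(\theta)-\overline{R}]$ supplied by {\bf MA}$(1,C)$, gives an exponential-moment bound of the form $\exp\{ \tfrac{\lambda^2 C}{2(n-1)(1-\text{something})}[R(\theta)-\overline{R}] \}$, valid as long as $\lambda/(n-1)$ is small enough — this is precisely where the restriction $\gamma\le (n-1)/(8C)$ comes from, chosen so that the quadratic-in-$\lambda$ term is dominated by half of the linear term.

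The second step is the usual PAC-Bayesian duality. Having an exponential-moment bound pointwise in $\theta$, I would integrate against the prior $\pi_\xi$, apply Fubini, and then invoke the Donsker–Varadhan variational formula $\log \int e^{h(\theta)} \pi(\dd\theta) = \sup_{\rho} \{ \int h \, \dd\rho - \mathcal{K}(\rho,\pi)\}$, followed by Markov's inequality, to obtain: with probability at least $1-\varepsilon$, simultaneously for all $\rho\in\mathcal{M}_+^1$,
\[
\int (R(\theta)-\overline{R})\,\rho(\dd\theta) \le \int (R_n(\theta)-\overline{R}_n)\,\rho(\dd\theta) + \frac{\text{(quadratic term in }R-\overline{R}) + \mathcal{K}(\rho,\pi) + \log(2/\varepsilon)}{\gamma}.
\]
Applying the same argument with the roles of $R_n$ and $R$ swapped (and a union bound, which turns $2/\varepsilon$ into $4/\varepsilon$) gives the reverse inequality controlling $\int (R_n(\theta)-\overline{R}_n)\,\rho(\dd\theta)$ by $\int (R(\theta)-\overline{R})\,\rho(\dd\theta)$.

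The third step specializes $\rho$ to the Gibbs posterior $\pi_{\xi,\gamma}(\cdot|\data)$. The point of this choice is that $\rho=\post$ is the exact minimizer of $\rho \mapsto \gamma\int R_n(\theta)\rho(\dd\theta) + \mathcal{K}(\rho,\pi)$, so $\gamma\int R_n\,\dd\post + \mathcal{K}(\post,\pi) \le \gamma\int R_n\,\dd\rho + \mathcal{K}(\rho,\pi)$ for every competitor $\rho$. Feeding this optimality inequality into the two deviation bounds, and using the reverse bound to replace $\int R_n\,\dd\rho - \overline{R}_n$ by (a constant times) $\int R\,\dd\rho - \overline{R}$ plus a KL term, everything can be rearranged: the quadratic terms $c(\int R\,\dd\post - \overline{R})$ appearing on the right-hand side get absorbed into the left-hand side (this is why a factor $2$ appears in front of the infimum — one needs $\gamma$ large enough, or rather the constants chosen, so that $1 - (\text{coefficient})\ge 1/2$), and one is left with the claimed bound, the infimum over $\rho$ arising because the competitor $\rho$ was arbitrary throughout.

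The main obstacle is the first step: getting the exponential-moment bound for the U-statistic with the \emph{right} dependence on the variance, so that {\bf MA}$(1,C)$ can be exploited to turn a "slow rate" $O(1/\sqrt{n})$ into the "fast rate" $O(1/n)$ implicit in the statement (the bound has $\gamma$ in the denominator with $\gamma\sim n$, and no stray $\sqrt{n}$). The Hoeffding decomposition handles the U-statistic dependence cleanly, but one must be careful that the variance proxy after decomposition is still governed by $\E[(q^\theta_{1,2})^2]$ and not something larger, and that the Bernstein bound is applied in the regime $\lambda \lesssim (n-1)/C$ where the linear term dominates — the constant $8$ in $\gamma\le(n-1)/(8C)$ is exactly the slack needed to make the absorption in step three go through with final constant $2$. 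Everything else is bookkeeping with the variational formula and Markov's inequality.
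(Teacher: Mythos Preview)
Your proposal is correct and follows essentially the same route as the paper: Hoeffding's decomposition plus Bernstein for the U-statistic exponential moment, then Donsker--Varadhan duality with Fubini and Markov to get a PAC-Bayes bound uniform in $\rho$, the symmetric bound via a union bound (whence the $4/\varepsilon$), and finally the Gibbs-posterior optimality to pass to the infimum, with the absorption of the variance term under {\bf MA}$(1,C)$ producing the factor $2$. One small correction: the $q^{\theta}_{i,j}$ lie in $[-2,2]$, not $[-1,1]$ (the centering terms $-R(\theta)+\overline{R}$ contribute an extra unit), which is why the constant $4$ appears in the denominator $1-4\gamma/(n-1)$ of the Bernstein bound and ultimately in the condition $\gamma\le(n-1)/(8C)$.
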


\begin{lemma}
\label{lemma-pacbayes2}
Assume $\textbf{MA}(\infty,C)$ with $C\geq 1$. For any fixed $\gamma$ with $0< \gamma
\leq(n-1)/8$, for any $\epsilon>0$ with probability $1-\epsilon$ on the drawing of $\data$,
\[
\int R(\theta)\post\dd\theta-\bar{R} \leq \inf_{\rho \in \mathcal{M}_+^1}
\left\{\int R(\theta)\rho(d\theta)-\bar{R}+2\frac{\mathcal{K}(\rho,\pi)
+\log\frac2\epsilon}{\gamma}\right\}+\frac{16\gamma}{n-1}.
\]
\end{lemma}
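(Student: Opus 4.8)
The plan is to follow the classical PAC-Bayesian route, adapted to the U-statistic structure of the empirical AUC risk $R_n(\theta)$. First I would observe that $R_n(\theta)$ is a U-statistic of order 2 in the i.i.d.\ data, so the exponential-moment control needed for the PAC-Bayesian argument cannot be obtained directly on the whole sample; instead I would use Hoeffding's decomposition of a U-statistic into an average over $\lfloor n/2\rfloor$ sums of independent terms (the standard trick going back to Hoeffding, used in this context by \cite{Clemencon2008} and \cite{Robbiano2013}), which is precisely where the effective sample size $n-1$ (rather than $n$) enters the constraint $\gamma\le (n-1)/8$. The quantity to control is $\gamma[(R_n(\theta)-\bar R_n)-(R(\theta)-\bar R)]$, and under $\textbf{MA}(\infty,C)$ one simply uses the crude bound $|q^\theta_{1,2}|\le 2$ together with $\E[(q^\theta_{1,2})^2]\le 4$ to get a sub-Gaussian-type bound on the exponential moment of the centered summands.

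Next I would invoke the Donsker–Varadhan variational formula for the Kullback–Leibler divergence: for any fixed measurable function $h(\theta,\data)$ and any prior $\pi$, $\log\int e^{h}\,\dd\pi \ge \int h\,\dd\rho - \mathcal{K}(\rho,\pi)$ for all $\rho\in\mathcal{M}_+^1$, with the Gibbs measure achieving equality. Applying this with $h(\theta) = \gamma[(R_n(\theta)-\bar R_n) - (R(\theta)-\bar R)] - (\text{its log-exponential-moment bound})$ and then taking expectation over $\data$, Fubini plus the exponential-moment estimate gives that $\E_\data \int e^{h}\,\dd\pi \le 1$; Markov's inequality then yields, with probability at least $1-\epsilon$, a bound of the form
\[
\gamma\int\!\big[(R_n(\theta)-\bar R_n)-(R(\theta)-\bar R)\big]\rho(\dd\theta) \le \mathcal{K}(\rho,\pi) + \log\tfrac{1}{\epsilon} + \tfrac{4\gamma^2}{n-1}
\]
uniformly in $\rho$, and symmetrically a bound in the other direction (this is the source of the two $\log\frac{2}{\epsilon}$ terms after a union bound over the two one-sided events, each at level $\epsilon/2$).

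Then I would specialize $\rho$ to the Gibbs posterior $\post$ on one side and keep $\rho$ arbitrary on the other. Using the definition of $\post$ as the minimizer of $\rho\mapsto \int \gamma R_n\,\dd\rho + \mathcal{K}(\rho,\pi)$, one gets
\[
\gamma\!\int\! R_n(\theta)\post\dd\theta + \mathcal{K}(\post,\pi) \le \gamma\!\int\! R_n(\theta)\rho(\dd\theta) + \mathcal{K}(\rho,\pi)
\]
for every competitor $\rho$. Chaining this with the two deviation inequalities — replacing $\int R_n\post\dd\theta$ from below by $\int R\post\dd\theta$ minus a deviation term, and $\int R_n\,\dd\rho$ from above by $\int R\,\dd\rho$ plus a deviation term — and cancelling $\bar R_n$ (it appears on both sides), the $\mathcal{K}(\post,\pi)\ge 0$ term is dropped, one divides through by $\gamma$, and the constant $\frac{4\gamma^2}{n-1}$ appearing on each side contributes a total of $\frac{8\gamma}{n-1}$; a slightly more careful bookkeeping of the two one-sided constants (or using $2$ in the exponential-moment bound) gives exactly the stated $\frac{16\gamma}{n-1}$. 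The factor $2$ in front of the $\mathcal{K}$ term in Lemma~\ref{lemma-pacbayes} came from also having to absorb a variance term via $\textbf{MA}(1,C)$; under $\textbf{MA}(\infty,C)$ that step is replaced by the crude deterministic bound, which is why here the leading constant in front of the infimum is $1$ rather than $2$ but at the cost of the additive residual $\frac{16\gamma}{n-1}$.

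The main obstacle is the U-statistic structure: getting a clean exponential-moment bound requires the Hoeffding decomposition into i.i.d.\ blocks, and one must be careful that the symmetrization/decomposition is compatible with the $\rho$-integration and with the subtraction of $\bar R$ rather than an empirical centering — this is the only genuinely non-routine point, and it is exactly where the $n-1$ (half the pairs, roughly) shows up both in the admissible range of $\gamma$ and in the residual term. Everything else (Donsker–Varadhan, Fubini, Markov, the variational characterization of the Gibbs measure) is the standard PAC-Bayesian machinery of \cite{Catoni2007}.
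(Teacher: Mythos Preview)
Your approach is essentially the same as the paper's: Bernstein-type exponential-moment control of the centered U-statistic via Hoeffding's decomposition, then Donsker--Varadhan plus Fubini and Markov to obtain two one-sided PAC-Bayesian deviation inequalities uniform in $\rho$, specialization to the Gibbs posterior through its variational characterization, and a union bound. Two small corrections on bookkeeping: the exact $16\gamma/(n-1)$ comes from the Bernstein denominator $\bigl(1-\tfrac{4\gamma}{n-1}\bigr)^{-1}\le 2$ under $\gamma\le(n-1)/8$, giving $8\gamma/(n-1)$ on each of the two sides; and $\mathcal{K}(\post,\pi)$ is not ``dropped'' but absorbed via the identity $\int R_n\,\post\,\dd\theta+\gamma^{-1}\mathcal{K}(\post,\pi)=\inf_\rho\bigl[\int R_n\,\dd\rho+\gamma^{-1}\mathcal{K}(\rho,\pi)\bigr]$, which is what produces the infimum and the factor $2$ in front of $\mathcal{K}(\rho,\pi)$.
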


Both lemmas bound the expected risk excess, for a random estimator of $\theta$ generated from $\post$. 

\subsection{Independent Gaussian Prior}\label{sub:gauss_prior}

We now specialise these results to the prior density 
$\pi_\xi(\theta)=\prod_{i=1}^d \varphi(\theta_i;0,\vartheta)$, 
i.e. a product of independent Gaussian distributions $N(0,\vartheta)$;
$\xi=\vartheta$ in this case. 

\begin{thm}
\label{thm-one-ind}
Assume $\mathbf{MA}(1,C)$, $C\geq 1$, $\mathbf{Dens}(c)$, $c>0$, and take 
$\vartheta=\frac2d (1+\frac1{n^2d})$, $\gamma=(n-1)/8C$, then there exists a constant $\alpha=\alpha(c,C,d)$ such that for any $\epsilon>0$, with probability $1-\epsilon$,
\[
\int R(\theta)\pi_\gamma(\theta\vert \data)\dd\theta-\bar{R}\leq 2\inf_{\theta_0}\left\{R(\theta_0)-\bar{R}\right\}
+\alpha \frac{d\log(n)+\log\frac4\epsilon}{n-1}
.\]
\end{thm}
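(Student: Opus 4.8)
The plan is to apply Lemma~\ref{lemma-pacbayes} with the stated value $\gamma = (n-1)/(8C)$, and then to choose a convenient feasible distribution $\rho$ in the infimum that is simultaneously close to an optimal linear score and cheap in KL divergence with respect to the Gaussian prior $\pi_\xi$. Concretely, fix any direction $\theta_0$ (WLOG normalised, since $R(\cdot)$ is scale-invariant in $\theta$) achieving a risk near $\inf_{\theta_0} R(\theta_0)$, and take $\rho$ to be the Gaussian prior restricted/translated to a small ball of radius $\delta$ around $\theta_0$ — e.g. $\rho = \N_d(\theta_0, \tau^2 I_d)$ truncated to $\{\|\theta-\theta_0\|\le\delta\}$, or simply the prior conditioned to that ball. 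The two quantities to control are then (i) the excess risk term $\int R(\theta)\rho(\dd\theta) - R(\theta_0)$, and (ii) the complexity term $\mathcal{K}(\rho,\pi_\xi)$.

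For (i), this is exactly where Assumption \textbf{Dens}$(c)$ enters: it gives a Lipschitz-type control, $|R(\theta)-R(\theta')| \le 2c\|\theta/\|\theta\| - \theta'/\|\theta'\|\|$ (the factor coming from splitting the event $\{(s_\theta(X)-s_\theta(X'))(Y-Y')<0\}$ against the corresponding event for $\theta'$ and bounding each by $c$ times the distance between the normalised directions). Hence for $\rho$ supported on a $\delta$-ball around $\theta_0$, $\int R(\theta)\rho(\dd\theta) - R(\theta_0) = O(c\delta)$. For (ii), the KL divergence between a localised Gaussian (variance $\tau^2$ per coordinate, mean $\theta_0$ with $\|\theta_0\|=1$) and the prior $\N(0,\vartheta I_d)$ is, up to the truncation correction, of order $\frac{d}{2}\log(\vartheta/\tau^2) + \frac{\|\theta_0\|^2 + d\tau^2}{2\vartheta} - \frac d2$; with the prescribed $\vartheta = \tfrac2d(1+\tfrac1{n^2 d})$ and a choice like $\tau^2 \asymp 1/(n^2 d)$ (so that $\delta \asymp 1/n$), this is $O(d\log n)$. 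One must also check the truncation does not blow up the KL — since the ball has radius $\delta$ and $\tau \ll \delta$, the truncated mass is $1-o(1)$, contributing only a negligible additive constant.

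Plugging these into Lemma~\ref{lemma-pacbayes} gives
\[
\int R(\theta)\post\dd\theta - \bar R \le 2\Big(R(\theta_0)-\bar R + O(c/n)\Big) + 4\,\frac{O(d\log n) + \log(4/\varepsilon)}{\gamma},
\]
and since $\gamma = (n-1)/(8C)$, the last term is $O\!\big(C\,\frac{d\log n + \log(4/\varepsilon)}{n-1}\big)$. Taking the infimum over $\theta_0$ and absorbing all $c,C,d$-dependent constants into a single $\alpha=\alpha(c,C,d)$ yields the claimed bound; the $O(c/n)$ excess-risk contribution is dominated by the $\alpha\,d\log(n)/(n-1)$ term, so it disappears into $\alpha$ as well. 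One should double-check that the constraint $\gamma \le (n-1)/(8C)$ required by Lemma~\ref{lemma-pacbayes} is met with equality by the chosen $\gamma$, which it is.

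The main obstacle is getting the KL term to scale as $d\log n$ rather than something worse: this forces the somewhat delicate coupling between the prior variance $\vartheta$ (which is essentially forced to be $\Theta(1/d)$ so that a unit-norm $\theta_0$ is not too improbable under the prior) and the localisation scale $\tau$ (small enough that the excess-risk term is $O(1/n)$, but not so small that $\frac{d}{2}\log(\vartheta/\tau^2)$ exceeds $O(d\log n)$). The factor $(1+\tfrac1{n^2d})$ in $\vartheta$ and the implied choice $\tau^2 \asymp n^{-2}d^{-1}$ are precisely tuned to make $\|\theta_0\|^2/(2\vartheta) \approx d/4$ and $\log(\vartheta/\tau^2) \approx \log(n^2) = 2\log n$ balance out; verifying this arithmetic, together with the truncation bookkeeping, is the only real work.
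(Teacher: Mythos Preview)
Your proposal is correct and follows essentially the same route as the paper: apply Lemma~\ref{lemma-pacbayes} with $\gamma=(n-1)/(8C)$, localise $\rho$ in a $\delta$-ball around a unit-norm $\theta_0$, control the excess risk via \textbf{Dens}$(c)$ to get $O(c\delta)$, bound the KL as $O(d\log n)$, and take $\delta=1/n$. The paper picks exactly your second option, $\rho_{\theta_0,\delta}(\dd\theta)\propto \ind\{\|\theta-\theta_0\|\le\delta\}\,\pi_\xi(\dd\theta)$, so that $\mathcal{K}(\rho,\pi_\xi)=-\log\pi_\xi(B_\delta(\theta_0))$ and the truncation bookkeeping disappears; it then lower-bounds $\pi_\xi(B_\delta(\theta_0))$ coordinate-wise. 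One small slip in your write-up: with $\tau^2\asymp 1/(n^2 d)$ and $\delta=1/n$ you have $\delta^2/\tau^2\asymp d$, so the ball captures $\Theta(1)$ (not $1-o(1)$) of the $\N(\theta_0,\tau^2 I_d)$ mass; this is harmless since $-\log Z_B=O(1)$ is absorbed into $\alpha$, but it is worth stating correctly.
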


\begin{thm}
\label{thm-inf-ind}
Assume $\mathbf{MA}(\infty,C)$, $C\geq 1$, $\mathbf{Dens}(c)$ $c>0$, and take 
$\vartheta=\frac2d (1+\frac1{n^2d})$, $\gamma=C\sqrt{d n\log(n)}$, there exists a constant $\alpha=\alpha(c,C,d)$ such that for any $\epsilon>0$, with probability $1-\epsilon$,
\[
\int R(\theta)\pi_\gamma(\theta\vert \data)\dd\theta-\bar{R}\leq \inf_{\theta_0}\left\{R(\theta_0)-\bar{R}\right\}
+\alpha \frac{\sqrt{d\log(n)} +\log\frac2\epsilon}{\sqrt{n}}
.\]
\end{thm}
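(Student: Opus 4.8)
The plan is to instantiate the general PAC-Bayesian bound of Lemma~\ref{lemma-pacbayes2} at the product-Gaussian prior $\pi_\vartheta$ with a conveniently concentrated competitor $\rho$, and then to optimise over the inverse temperature $\gamma$. Because the linear AUC risk is scale invariant, $R(\theta)=R(\theta/\|\theta\|)$, it is enough to take the infimum over $\theta_0$ on the unit sphere. Fix such a $\theta_0$ and choose $\rho=\rho_{\theta_0}\eqdef\N_d(\theta_0,\delta^2 I_d)$ with $\delta^2$ of order $1/(n^2d)$; the mildly inflated prior variance $\vartheta=\tfrac2d\big(1+\tfrac1{n^2d}\big)$ is tuned so that the leftover constants in the Kullback--Leibler term combine cleanly. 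Keeping only $\rho=\rho_{\theta_0}$ in the infimum of Lemma~\ref{lemma-pacbayes2} gives, with probability $1-\epsilon$,
\[
\int R(\theta)\,\post\,\dd\theta-\bar{R} \;\le\; \big(R(\theta_0)-\bar{R}\big) + \Big(\!\int R(\theta)\,\rho_{\theta_0}(\dd\theta)-R(\theta_0)\Big) + \frac{2\mathcal{K}(\rho_{\theta_0},\pi_\vartheta)+2\log(2/\epsilon)}{\gamma} + \frac{16\gamma}{n-1}.
\]

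The two $\rho_{\theta_0}$-dependent quantities are then estimated uniformly in $\theta_0$. The divergence between two Gaussians is explicit, $\mathcal{K}(\rho_{\theta_0},\pi_\vartheta)=\tfrac12\big(d\delta^2/\vartheta-d+\|\theta_0\|^2/\vartheta+d\log(\vartheta/\delta^2)\big)$, and with $\|\theta_0\|=1$, $\vartheta\asymp 2/d$ and $\delta^2\asymp 1/(n^2d)$ this is bounded by $\alpha_1(d)\log n$ for an explicit $\alpha_1(d)$, independently of the direction $\theta_0$. For the approximation term, I would first derive from Assumption~\textbf{Dens}$(c)$ the angular Lipschitz estimate $|R(\theta)-R(\theta')|\le\P\big(\langle X_1-X_2,\theta\rangle\langle X_1-X_2,\theta'\rangle<0\big)\le 2c\|\theta-\theta'\|$ for unit vectors $\theta,\theta'$ (the middle probability is the chance that the two half-spaces disagree, which \textbf{Dens} bounds by symmetry). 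Combined with scale invariance this yields $\int R(\theta)\rho_{\theta_0}(\dd\theta)-R(\theta_0)\le 2c\,\E_{\rho_{\theta_0}}\big\|\theta/\|\theta\|-\theta_0\big\|$; writing $\theta=\theta_0+\delta\zeta$ with $\zeta\sim\N_d(0,I_d)$ and using the reverse triangle inequality, $\big\|\theta/\|\theta\|-\theta_0\big\|\le\big|1-\|\theta\|\big|+\|\theta-\theta_0\|\le 2\delta\|\zeta\|$, so this term is at most $4c\delta\,\E\|\zeta\|\le 4c\delta\sqrt d=O(c/n)$.

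Assembling the pieces, with probability $1-\epsilon$ the right-hand side is at most $\inf_{\theta_0}\{R(\theta_0)-\bar{R}\}+\big(2\alpha_1(d)\log n+2\log(2/\epsilon)\big)/\gamma+16\gamma/(n-1)+O(c/n)$. Balancing the $1/\gamma$ and $\gamma/(n-1)$ terms dictates $\gamma\asymp\sqrt{dn\log n}$; the stated value $\gamma=C\sqrt{dn\log n}$ achieves this and, for $n$ large enough relative to $(C,d)$, also meets the admissibility constraint $\gamma\le(n-1)/8$ required by Lemma~\ref{lemma-pacbayes2} — the remaining finitely many values of $n$ being absorbed into the final constant, since the left-hand side is always $\le 1$. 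This produces the announced bound with $\alpha=\alpha(c,C,d)$. Given Lemma~\ref{lemma-pacbayes2}, no single step is a genuine obstacle; the points that require care are that the Kullback--Leibler and approximation estimates be uniform over $\theta_0$ on the sphere (so that $\alpha$ is free of $n$ and $\epsilon$) and the verification of the admissibility of $\gamma$. Theorem~\ref{thm-one-ind} follows by the same argument, using Lemma~\ref{lemma-pacbayes} instead and taking $\gamma$ at its largest admissible value $(n-1)/(8C)$, which is why the factor in front of $\inf_{\theta_0}$ there is $2$ rather than $1$.
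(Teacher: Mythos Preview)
Your argument is correct and follows the same overall strategy as the paper's (very terse) proof: invoke Lemma~\ref{lemma-pacbayes2}, localise around an arbitrary unit $\theta_0$, bound the approximation term via \textbf{Dens}$(c)$ and the KL term, then balance $\gamma$ against $n$. The one technical difference is your choice of competitor $\rho$. The paper (reusing its proof of Theorem~\ref{thm-one-ind}) takes $\rho_{\theta_0,\delta}(\dd\theta)\propto\mathbf{1}\{\|\theta-\theta_0\|\le\delta\}\,\pi_\vartheta(\dd\theta)$, the prior restricted to a small ball, so that $\mathcal{K}(\rho,\pi_\vartheta)=-\log\pi_\vartheta(\|\theta-\theta_0\|\le\delta)$ and one must lower-bound the Gaussian mass of a ball; you instead take $\rho_{\theta_0}=\N_d(\theta_0,\delta^2 I_d)$ and use the closed-form KL between Gaussians. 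Both choices produce a KL of order $d\log n$ and an approximation term of order $1/n$, hence the same rate after setting $\gamma\asymp\sqrt{dn\log n}$. Your Gaussian competitor is arguably cleaner here (no tail estimate for the ball probability), while the paper's truncated prior makes the approximation term trivially $\le 2c\delta$ without having to integrate $\|\theta/\|\theta\|-\theta_0\|$ over $\rho$. Either way the substance is identical, and your handling of the admissibility constraint $\gamma\le(n-1)/8$ by absorbing finitely many small $n$ into $\alpha$ is exactly the right way to close the argument.
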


The proof of these results is provided in the appendix. 
It is known that, under $\mathbf{MA}(\kappa,C)$, the rate $(d/n)^{\frac{\kappa}{2\kappa-1}}$
is minimax-optimal for classification problems, see~\cite{Lecue}. Following~\cite{Robbiano2013}
we conjecturate that this rate is also optimal for ranking problems.

\subsection{Spike and slab prior for feature selection}\label{sub:spike_prior}

The independent Gaussian prior considered in the previous section is a natural
choice, but it does not accommodate sparsity, that is, the possibility that only a small subset of the components of $X_i$ actually determine the membership to either class. 
For sparse scenarios, one may use the spike and slab prior of \cite{mitchell1988bayesian}, \cite{GeorgeMcCulloch}, 
$$ \pi_\xi(\theta) = \prod_{i=1}^{d} \left[p \varphi(\theta_i;0,v_1)
    + (1-p) \varphi(\theta_i;0,v_0) \right] $$
with $\xi=(p,v_0,v_1)\in[0,1]\times (\R^+)^2$, and $v_0 \ll v_1$, 
for which we obtain  the following result.
Note $\|\theta\|_0$ is 
the number of non-zero coordinates for $\theta \in\mathbb{R}^d$. 

\begin{thm}\label{thm:spikeslab}
Assume {\bf MA}$(1,C)$ holds with $C\geq 1$, 
{\bf Dens}$(c)$ holds with $c>0$, and take $p=1-\exp(-1/d)$, $v_0 \leq 1/(2nd\log(d))$,
and $\gamma=(n-1)/(8C)$. Then
there is a constant $\alpha=\alpha(C,v_1,c)$ such that
for any $\varepsilon>0$, with probability at least $1-\varepsilon$
on the drawing of the data $\data$,
$$
\int R(\theta) \pi_{\gamma}({\rm d}\theta|\data) - \overline{R}
\leq 2
\inf_{\theta_0} \Biggl\{
 R(\theta_0) - \overline{R} 
+ \alpha \frac{\|\theta_0\|_0 \log(nd) + \log\left(\frac{4}{\varepsilon}\right)}{2(n-1)}
\Biggr\}.
$$
\end{thm}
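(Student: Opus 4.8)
The plan is to apply Lemma~\ref{lemma-pacbayes} with the spike-and-slab prior $\pi_\xi$ and then bound the infimum over $\rho \in \mathcal{M}_+^1$ by a well-chosen sub-family of distributions. First I would fix an arbitrary $\theta_0 \in \R^d$ with $s := \|\theta_0\|_0$ nonzero coordinates, say on an index set $S$, and choose $\rho = \rho_{\theta_0}$ to be a product of Gaussians: on each coordinate $i \in S$, a narrow Gaussian $\N(\theta_{0,i}, \delta^2)$ centered at $\theta_{0,i}$, and on each coordinate $i \notin S$, a narrow Gaussian $\N(0,\delta^2)$ centered at $0$, for a bandwidth $\delta$ to be optimized (one expects $\delta^2$ of order $1/(nd)$ or so). With this choice, Lemma~\ref{lemma-pacbayes} and $\gamma = (n-1)/(8C)$ give
\[
\int R(\theta)\,\pi_\gamma(\dd\theta|\data) - \overline{R}
\leq 2\left\{ \int R(\theta)\,\rho_{\theta_0}(\dd\theta) - \overline{R}
+ \frac{16C\left(\mathcal{K}(\rho_{\theta_0},\pi_\xi) + \log\frac{4}{\varepsilon}\right)}{n-1} \right\},
\]
so it remains to control the two terms in the braces.

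For the expected-risk term, I would use a Lipschitz-type property of $\theta \mapsto R(\theta)$ that follows from Assumption~\textbf{Dens}$(c)$: since $R$ depends on $\theta$ only through its direction, and $\mathbf{Dens}(c)$ controls the probability of a sign disagreement between $\langle X_1 - X_2, \theta\rangle$ and $\langle X_1-X_2,\theta'\rangle$ by $c\|\theta - \theta'\|$ for unit vectors, one gets $|R(\theta) - R(\theta_0)| \le c\,\|\theta/\|\theta\| - \theta_0/\|\theta_0\|\|$, and hence after integrating against $\rho_{\theta_0}$ and using that $\rho_{\theta_0}$ concentrates at scale $\delta$ around $\theta_0$, a bound of the form $\int R(\theta)\rho_{\theta_0}(\dd\theta) - R(\theta_0) \lesssim c\,\delta\sqrt{d}/\|\theta_0\|$ (this is the same estimate already used in the proof of Theorem~\ref{thm-one-ind}; I would cite it rather than redo it). Choosing $\delta$ polynomially small in $n,d$ makes this term $O(1/n)$ or smaller, absorbed into the stated rate.

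The heart of the matter is the KL term $\mathcal{K}(\rho_{\theta_0},\pi_\xi)$, and this is the step I expect to be the main obstacle — it is where the sparsity and the tuning $p = 1-\exp(-1/d)$, $v_0 \le 1/(2nd\log d)$ enter. Because both $\rho_{\theta_0}$ and $\pi_\xi$ are product measures, $\mathcal{K}(\rho_{\theta_0},\pi_\xi) = \sum_{i=1}^d \mathcal{K}\big(\rho_{\theta_0,i}, p\varphi(\cdot;0,v_1) + (1-p)\varphi(\cdot;0,v_0)\big)$, and I would bound each summand by replacing the mixture in the denominator with whichever single component dominates: for $i \in S$ use the slab, picking up $\log(1/p) + \frac12\log(v_1/\delta^2) + \tfrac{\delta^2 + \theta_{0,i}^2}{2v_1} - \tfrac12$; for $i \notin S$ use the spike, picking up $\log\frac{1}{1-p} + \frac12\log(v_0/\delta^2) + \tfrac{\delta^2}{2v_0} - \tfrac12$. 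Summing, the $i \notin S$ contributions total at most $d\log\frac{1}{1-p} = 1$ (by the choice of $p$) plus $\tfrac{d}{2}(\delta^2/v_0 + \log(v_0/\delta^2))$, which the constraint $v_0 \le 1/(2nd\log d)$ together with a suitably small $\delta$ keeps $O(\log(nd))$; the $i\in S$ contributions total $s\log\frac1p + O(s)$ plus $\tfrac{s}{2v_1}(\delta^2 + \max_i\theta_{0,i}^2)$, and here one uses $\log\frac1p = \log\frac{1}{1-\exp(-1/d)} \asymp \log d$ to get $\mathcal{K}(\rho_{\theta_0},\pi_\xi) \lesssim \|\theta_0\|_0\log(nd)$ with a constant depending only on $C, v_1, c$. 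A technical subtlety is that $\|\theta_0\|_\infty$ appears unless one first rescales $\theta_0$ to unit norm (which is harmless since $R(\theta_0) = R(\theta_0/\|\theta_0\|)$); after that $\theta_{0,i}^2 \le 1$ and the bound is clean. Plugging $\mathcal{K}(\rho_{\theta_0},\pi_\xi) \le \alpha'\|\theta_0\|_0\log(nd)$ back into Lemma~\ref{lemma-pacbayes}, collecting constants into $\alpha = \alpha(C,v_1,c)$, and taking the infimum over $\theta_0$ yields exactly the claimed inequality.
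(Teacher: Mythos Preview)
Your overall strategy—apply Lemma~\ref{lemma-pacbayes} and bound the infimum via a well-localized $\rho$—matches the paper's, but your specific choice of $\rho$ as a product of Gaussians with a common variance $\delta^2$ breaks down precisely at the step you flagged as ``the main obstacle''. The claim that $\tfrac{d}{2}\bigl(\delta^2/v_0 + \log(v_0/\delta^2)\bigr)$ can be kept $O(\log(nd))$ using $v_0 \le 1/(2nd\log d)$ is incorrect: that hypothesis is only an \emph{upper} bound on $v_0$, hence a \emph{lower} bound $1/v_0 \ge 2nd\log d$, and it gives no control on $\delta^2/v_0$ from above. For any fixed $\delta>0$ the quantity $d\,\delta^2/v_0$ diverges as $v_0 \to 0$; if instead you tie $\delta^2$ to $v_0$ (the only way to keep that term bounded), the slab-side contribution $\tfrac{s}{2}\log(v_1/\delta^2)=\tfrac{s}{2}\log(v_1/v_0)$ becomes unbounded in $v_0$, so $\alpha$ can no longer depend only on $(C,v_1,c)$. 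There is a second, related issue: with a product-Gaussian $\rho$ the risk estimate is $\int R\,\dd\rho - R(\theta_0) \le 2c\,\E_\rho\|\theta-\theta_0\| \asymp c\,\delta\sqrt{d}$, not $2c\,\delta$; the proof of Theorem~\ref{thm-one-ind} that you cite obtains $2c\delta$ only because its $\rho$ is supported on a $\delta$-ball, not because of the Lipschitz step itself.

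The paper sidesteps both difficulties by taking $\rho_{\theta_0,\delta}(\dd\theta) \propto \mathbf{1}\{\|\theta-\theta_0\|\le\delta\}\,\pi_\xi(\dd\theta)$, i.e.\ the spike-and-slab prior itself truncated to the $\delta$-ball. Then $\mathcal{K}(\rho_{\theta_0,\delta},\pi_\xi) = -\log\pi_\xi(\|\theta-\theta_0\|\le\delta)$, and on each zero coordinate one lower-bounds the ball mass via the spike alone: $\pi_0(\theta_i^2 > \delta^2/d) \le \exp\bigl(-\delta^2/(2v_0 d)\bigr)$, which is \emph{decreasing} in $v_0$, so the upper bound on $v_0$ is now used in the correct direction and smaller $v_0$ only helps. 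A union bound over the $d-s$ zero coordinates then costs merely a $\log 2$, while the $\|\theta_0\|_0$ nonzero coordinates contribute the $\|\theta_0\|_0\log(nd)$ term essentially as in your sketch. Moreover $\|\theta-\theta_0\|\le\delta$ holds $\rho$-almost surely, so the risk term is exactly $2c\delta$ with no $\sqrt{d}$ factor. Replacing your product-Gaussian $\rho$ by this truncated-prior $\rho$ repairs the argument; the remainder of your outline (normalizing $\theta_0$ to the unit sphere, invoking $\mathbf{Dens}(c)$, collecting constants into $\alpha$) is fine.
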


Compared to Theorem \ref{thm-one-ind}, the bound above increases logarithmically
rather than linearly in $d$, and depends explicitly on $\|\theta\|_0$, 
the sparsity of $\theta$.
This suggests that the spike and slab prior should lead
to better performance  than the Gaussian prior in sparse scenarios.
The rate $\|\theta\|_0 \log(d)/n$ is the same as the one
obtained in sparse regression, see e.g.~\cite{BVDG}.

Finally, note that if $v_0\rightarrow 0$, we recover the more standard prior 
which assigns a point mass at zero for every component. However this leads
to a pseudo-posterior which is a mixture of $2^d$ components that mix Dirac 
masses and continuous distributions, and thus which is more difficult to approximate
(although see the related remark in Section \ref{sub:ep-spike} for Expectation-Propagation). 

\section{Practical implementation of the PAC-Bayesian approach}\label{sec:practical}

\subsection{Choice of hyper-parameters}

Theorems \ref{thm-one-ind}, \ref{thm-inf-ind}, and \ref{thm:spikeslab}
propose specific values for hyper-parameters $\gamma$ and $\xi$, but these values 
depend on some unknown constant $C$. Two data-driven ways to choose $\gamma$ and $\xi$
are (i) cross-validation (which we will use for $\gamma$), and (ii) (pseudo-)evidence
maximisation (which we will use for $\xi$). 

The latter may be justified from intermediate results of our proofs in the appendix, which provide an empirical bound on the expected risk:
\[
\int R(\theta)\post\dd\theta-\bar{R} 
\leq \Psi_{\gamma,n}\inf_{\rho \in \mathcal{M}_+^1}\left(\int R_n(\theta)\rho(d\theta)-\bar{R}_n+\frac{\mathcal{K}(\rho,\pi)+\log\frac2\epsilon}{\gamma}\right)
\]
with $\Psi_{\gamma,n}\leq 2$. The right-hand side is minimised at
$\rho(\dd\theta)=\post\dd\theta$, and the so-obtained bound is 
 $-\Psi_{\gamma,n}\log(\ev)/\gamma$ plus constants. Minimising the upper bound with respect to hyperparameter $\xi$ is therefore equivalent to maximising $\log\ev$
with respect to $\xi$. This is of course akin to the empirical Bayes approach
that is commonly used in probabilistic machine learning.
Regarding $\gamma$ the minimization is more cumbersome because the dependence with the $\log(2/\epsilon)$ term and $\Psi_{n,\gamma}$, which is why we recommend cross-validation
instead.

It seems noteworthy that, beside \cite{Alquier2013}, very few papers discuss the practical implementation of PAC-Bayes, beyond some brief mention of MCMC (Markov chain Monte Carlo). However, estimating the normalising constant of a target density simulated with MCMC is notoriously difficult. In addition, even if one 
decides to fix the hyperparameters to some arbitrary value, MCMC may become slow and difficult to calibrate if the dimension of the sampling space becomes large. 
This is particularly true if the target does not (as in our case) have some specific structure that make it possible to implement Gibbs sampling. 
The two next sections discuss two efficient approaches that make it possible to approximate both the pseudo-posterior $\post$ and its normalising constant,
and also to perform cross-validation with little overhead.

\subsection{Sequential Monte Carlo}\label{sub:smc}

Given the particular structure of the pseudo-posterior $\post$, a natural approach to simulate from $\post$ is to use tempering SMC  \citep[Sequential Monte Carlo][]{DelMoral2006} that is, define a certain sequence
$\gamma_0=0<\gamma_1<\ldots <\gamma_T$, start by sampling 
from the prior $\pi_\xi(\theta)$, then applies successive importance sampling steps, 
from $\posttm$ to $\postt$, leading to importance weights proportional to: 
$$ \frac{\postt}{\posttm} \propto \exp\left\{ - (\gamma_t-\gamma_{t-1}) R_n(\theta)
\right\}. $$
When the importance weights become too skewed, one rejuvenates the particles through
a resampling step (draw particles randomly with replacement, with probability proportional to the weights) and a move step (move particles according to a certain MCMC kernel). 

One big advantage of SMC is that it is very easy to make it fully adaptive. For 
the choice of the successive $\gamma_t$, we follow \cite{Jasra2007} in solving
numerically \eqref{eq:ESS} in order to impose that the Effective sample size has 
a fixed value. This ensures that the degeneracy of the weights always remain under a certain threshold.  
For the MCMC kernel, we use a Gaussian random walk Metropolis step, calibrated
on the covariance matrix of the resampled particles. See Algorithm \ref{algo-smc} for a
summary.

\begin{algorithm}
\caption{Tempering SMC}\label{algo-smc}
\begin{description}
\item[Input] $N$ (number of particles),  $\tau\in(0,1)$ (ESS threshold), $\kappa>0$ (random walk tuning parameter)
\item[Init.] Sample $\theta_0^i\sim\prior$ for $i=1$ to $N$, set $t\leftarrow 1$, $\gamma_0=0$, $Z_0=1$. 
\item[Loop]  
\begin{description}
\item[a.] Solve in $\gamma_t$ the equation
\begin{equation}\label{eq:ESS}
\frac{\{\sum_{i=1}^N w_t(\theta_{t-1}^i)\}^2}
{\sum_{i=1}^N \{w_t(\theta_{t-1}^i))^2\} } = \tau N, 
\quad w_t(\theta) = \exp[ -(\gamma_t-\gamma_{t-1}) R_n(\theta) ]
\end{equation}
using bisection search. If $\gamma_t\geq\gamma_T$, set 
$Z_T=Z_{t-1}\times \left\{\frac{1}{N} \sum_{i=1}^N w_t(\theta_{t-1}^i)\right\}$, and stop. 
\item[b.] Resample: for $i=1$ to $N$, draw $A_t^i$ in $1,\ldots,N$ so that 
$\P(A_t^i=j) = w_t(\theta_{t-1}^j)/\sum_{k=1}^N w_t(\theta_{t-1}^k)$; see Algorithm 1 
in the appendix. 
\item[c.] Sample $\theta_t^i \sim M_t(\theta_{t-1}^{A_t^i},\dd \theta)$ for $i=1$ to $N$
 where $M_t$ is a MCMC kernel that leaves invariant $\pi_t$; see Algorithm 3 in the appendix
 for an instance of such a MCMC kernel, which takes as an input $S=\kappa\hat{\Sigma}$, where
 $\hat{\Sigma}$ is the covariance matrix of the $\theta_{t-1}^{A_t^i}$. 
\item[d.] Set $Z_t = Z_{t-1}\times \left\{ \frac{1}{N} \sum_{i=1}^N w_t(\theta_{t-1}^i) \right\}$.
\end{description}
\end{description}
\end{algorithm}

In our context, tempering SMC brings two extra advantages: it makes it possible
to obtain samples from $\post$ for a whole range of values of $\gamma$, rather than 
a single value. And it provides an approximation of $\ev$ for the same range of $\gamma$ values, 
through the quantity $Z_t$ defined in Algorithm \ref{algo-smc}. 

\subsection{Expectation-Propagation (Gaussian prior)}\label{sub:ep}

The SMC sampler outlined in the previous section works fairly well, and we will use it as gold
standard in our simulations. However, as any other Monte Carlo method, it may be too slow for large datasets.
We now turn our attention to EP \citep[Expectation-Propagation][]{Minka2001}, a general framework to derive fast
approximations to target distributions (and their normalising constants). 

First note that the pseudo-posterior may be rewritten as: 
$$ \post = \frac{1}{\ev} \prior \times \prod_{i,j} f_{ij}(\theta),
\quad f_{ij}(\theta) =  
\exp\left[-\gamma' \ind\{  \left<\theta,X_i-X_j\right><0 \} \right]$$
where $\gamma'=\gamma/\np\nm$, and the product is over all $(i,j)$ such that $Y_i=1$, $Y_j=-1$. EP generates
an approximation of this target distribution based on the same factorisation: 
$$ q(\theta) \propto q_0(\theta) \prod_{i,j} q_{ij}(\theta),
\quad q_{ij}(\theta) = \exp\{- \frac{1}{2} \theta^T Q_{ij} \theta + r_{ij}^T \theta \}.
$$
We consider in the section the case where the prior is Gaussian, as in Section \ref{sub:gauss_prior}. Then
one may set $q_0(\theta)=\pi_\xi(\theta)$. 
The approximating factors are un-normalised Gaussian densities (under a natural parametrisation), 
leading to an overall approximation that is also Gaussian, but other types of exponential family parametrisations
may be considered; see next section and \cite{Seeger2005}. 
EP updates iteratively each site $q_{ij}$ (that is, it updates
the parameters $Q_{ij}$ and $r_{ij}$), conditional on all the sites, by matching the moments of $q$ 
with those of the hybrid distribution 
$$ h_{ij}(\theta) \propto q(\theta) \frac{f_{ij}(\theta)}{q_{ij}(\theta)}
\propto q_0(\theta) f_{ij}(\theta) \prod_{(k,l)\neq (i,j)} f_{kl}(\theta)
$$
where again the product is over all $(k,l)$ such that $Y_k=1$, $Y_l=-1$, and $(k,l)\neq (i,j)$. 

We refer to the appendix for a precise algorithmic description of our EP implementation. We highlight the following points.
First, the site update is particularly simple in our case: 
$$ h_{ij}(\theta) \propto \exp\{\theta^Tr^h_{ij}-\frac12 \theta^T Q^h_{ij}\theta\} \exp\left[ -\gamma' \ind\{ \left< \theta,X_i-X_j\right> < 0 \} \right],
$$
with $\, r^h_{ij} = \sum_{(k,l)\neq (i,j)} r_{kl}$,
$Q^h_{ij} = \sum_{(k,l)\neq (i,j)} Q_{kl}$, 
which may be interpreted as: $\theta$ conditional on $T(\theta)=\left<\theta,X_i-X_j\right>$ has a $d-1$-dimensional
Gaussian distribution, and the distribution of $T(\theta)$ is that of a one-dimensional Gaussian penalised
by a step function. The two first moments of this particular hybrid may therefore be computed exactly, 
and in $\OO(d^2)$ time, as explained in the appendix. The updates can be performed efficiently using the fact that the linear combination $(X_i-X_j)\theta$ is a one dimensional Gaussian. For our numerical experiment we used a parallel version of EP \cite{VanGerven2010}.
The complexity  of our EP
implementation is $\OO(\np\nm d^2+d^3)$. 

Second, EP offers at no extra cost an approximation
of the normalising constant $\ev$ of the target $\post$; in fact, one may even obtain derivatives of this approximated
quantity with respect to hyper-parameters. See again the appendix for more details. 

Third, in the EP framework, cross-validation may be interpreted as dropping all
the factors $q_{ij}$ that depend on a given data-point $X_i$ in the global approximation $q$. This makes it possible to implement cross-validation at little extra cost \citep{Opper2000}. 

\subsection{Expectation-Propagation (spike and slab prior)}\label{sub:ep-spike}

To adapt our EP algorithm to the spike and slab prior of Section \ref{sub:spike_prior}, we
introduce latent variables $Z_k=0/1$ which "choose" for each component $\theta_k$ whether it
comes from a slab, or from a spike, and we consider the joint target
$$ \postsl \propto \left\{\prod_{k=1}^d 
\B(z_k;p)\N(\theta_k;0,v_{z_k}) \right\}
\exp\left[ -\frac{\gamma}{\np\nm}\sum_{ij} \ind\{\left< \theta,X_i-X_j\right> >0 \} \right].
 $$

On top of the $\np\nm$ Gaussian sites defined in the previous section, we add a product of $d$ sites to approximate the prior. Following 
\cite{Hernandez-Lobato2013}, we use 
$$q_{k}(\theta_k,z_k) = \exp\left\{ z_k \log\left(\frac{p_k}{1-p_k}\right) -\frac{1}{2} \theta^2_k u_k +v_{k}\theta_k \right\} $$
that is a (un-normalised) product of an independent Bernoulli distribution for $z_k$, times a Gaussian distribution
for $\theta_k$. Again that the site update is fairly straightforward, and may be implemented
in $\OO(d^2)$ time.  See the appendix for more details. 
Another advantage of this formulation is that we  obtain a Bernoulli approximation of the marginal pseudo-posterior $\postarg{\gamma}{z_i=1}$ to use in feature selection. 
Interestingly taking $v_0$ to be exactly zero also yield stable results corresponding to the case where the spike is a Dirac mass. 

\section{Extension to non-linear scores}\label{sec:nonparam}

To extend our methodology to non-linear score functions, we consider 
the pseudo-posterior 
\begin{equation*}
\postGP 
\propto \priorGP \exp\left\{ -\frac{\gamma}{\np\nm} \sum_{i\in\datap,\,j\in\datam} \ind\{s(X_i)-s(X_j)>0\} \right\} \\
\end{equation*}
where $\priorGP$ is some prior probability measure with respect to an infinite-dimensional functional class. 
Let $s_i=s(X_i)$, $s_{1:n}=(s_1,\ldots,s_n)\in\R^n$, and assume that $\priorGP$ is a GP (Gaussian process)
associated to some kernel $k_\xi(x,x')$, 
then using a standard trick in the GP literature \citep{Rasmussen2006}, one may derive the marginal (posterior)
density (with respect to the $n$-dimensional Lebesgue measure) of $s_{1:n}$ as 
\begin{align*}
\postGPm & \propto \priorGPm \exp\left\{ -\frac{\gamma}{\np\nm} \sum_{i\in\datap,\,j\in\datam} \ind\{s_i-s_j>0\} \right\} \\
\end{align*} 
where $\priorGPm$ denotes the probability density of the $\N(0,K_\xi)$ distribution, and $K_\xi$ is the $n\times n$ matrix 
$\left(k_\xi(X_i,X_j)\right)_{i,j=1}^n$. 

This marginal pseudo-posterior retains essentially the structure of the pseudo-posterior $\post$ for linear scores, except that
the ``parameter'' $s_{1:n}$ is now of dimension $n$. We can apply straightforwardly the SMC sampler of Section \ref{sub:smc}, 
and the EP algorithm of \ref{sub:ep}, to this new target distribution. In fact, for the EP implementation, the particular simple structure
of a single site: 
$$ \exp\left[ -\gamma' \ind\{ s_i-s_j>0 \} \right]
$$
makes it possible to implement a site update in $\OO(1)$ time, leading to an overall complexity $\OO(\np\nm+n^3)$ for the EP algorithm. 

Theoretical results for this approach could be obtained by applying lemmas from e.g. \cite{Vaart2009}, but we leave this for future study.

\section{Numerical Illustration}\label{sec:numerics}

Figure 1 compares the EP approximation with the output of our  SMC sampler, 
on the well-known Pima Indians dataset and a Gaussian prior. 
Marginal first and second order moments essentially match; see the appendix for further details. The subsequent results are obtained with EP.

\begin{figure}[h]
\label{fig:Pimamarg}
\begin{center}
\begin{tabular}{lll}
\subfloat[$\theta_1$]{\includegraphics[scale=0.19]{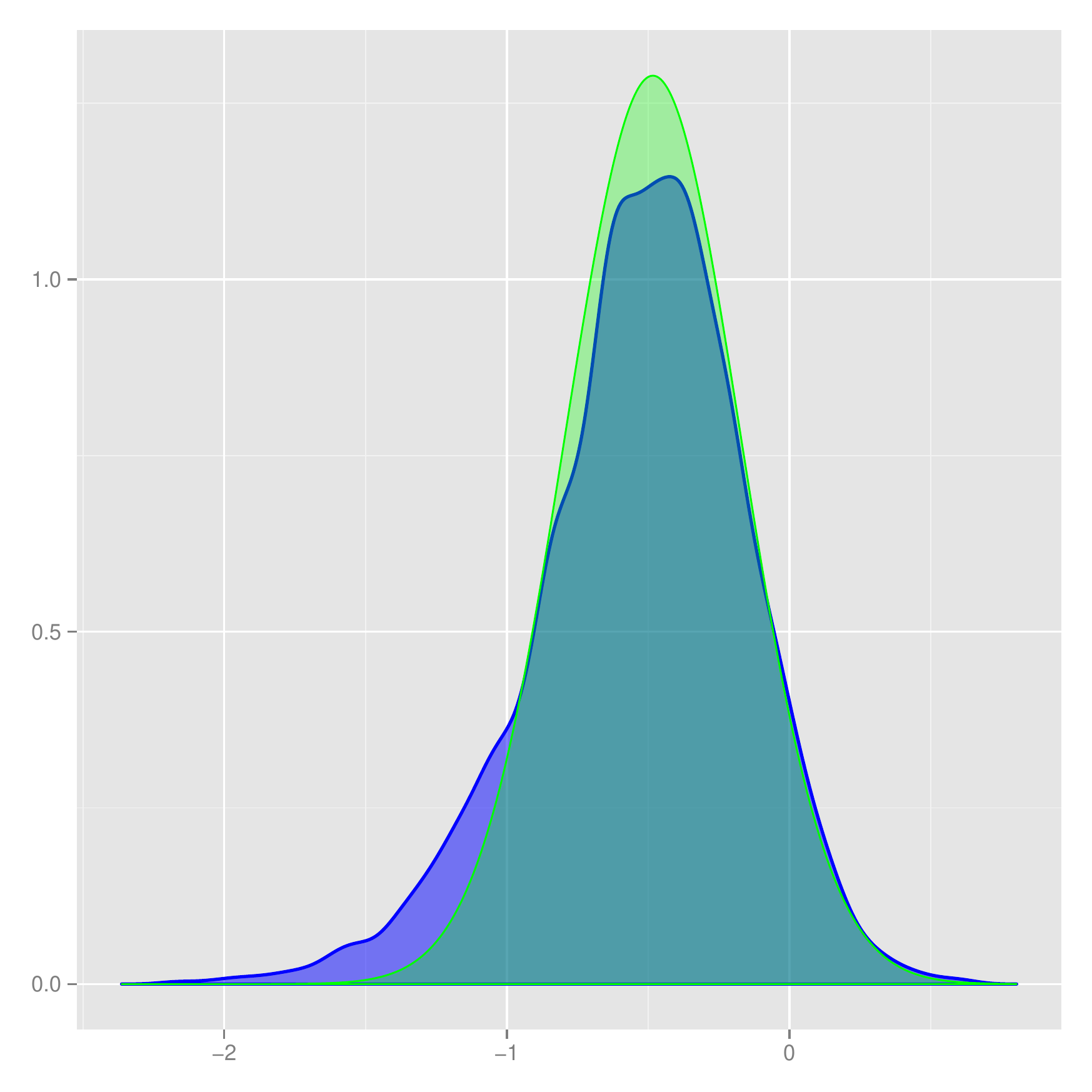}}
& 
\subfloat[$\theta_2$]{\includegraphics[scale=0.19]{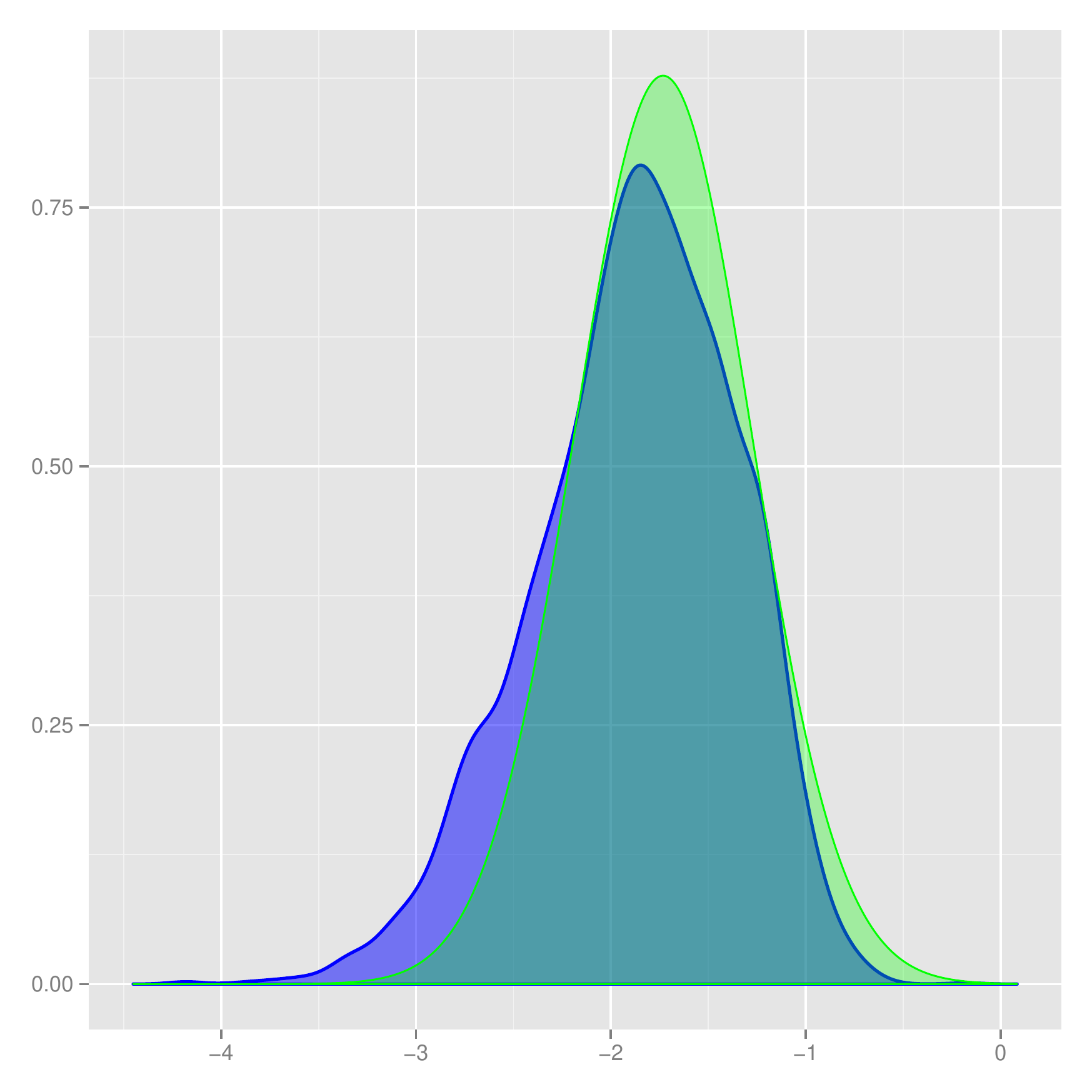} } 
&
\subfloat[$\theta_3$]{\includegraphics[scale=0.19]{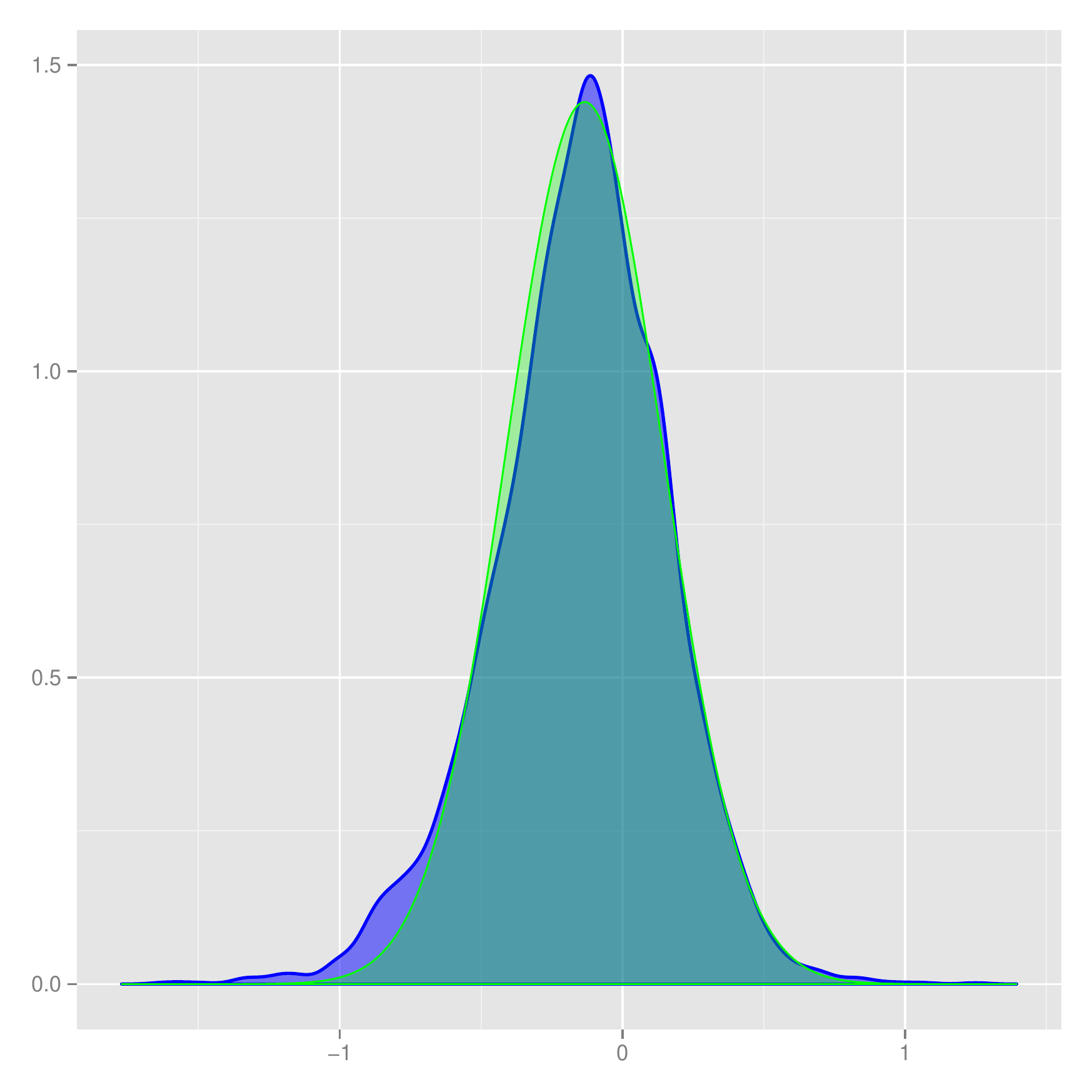} } 
\end{tabular}
\end{center}
\caption{EP Approximation (green), compared to SMC (blue) of the marginal posterior  of the first three coefficients, for Pima dataset (see the appendix for additional analysis).}
\end{figure}


We now compare our PAC-Bayesian approach (computed with EP) with Bayesian logistic
regression (to deal with non-identifiable cases), and with the rankboost algorithm \citep{Freund2003} on different datasets\footnote{All available at http://archive.ics.uci.edu/ml/ }; note that \cite{Cortes2003} showed that the function optimised by rankbook is AUC.

As mentioned in Section \ref{sec:practical}, we set the prior hyperparameters by maximizing the evidence, and we use cross-validation to choose $\gamma$. 
To ensure convergence of EP, when dealing with difficult sites, we use damping \citep{Seeger2005}. The GP version of the algorithm is based on a squared exponential kernel. Table \ref{sample-table} summarises the results; balance refers
to the size of the smaller class in the data (recall that the AUC criterion is 
particularly relevant for unbalanced classification tasks), EP-AUC (resp. GPEP-AUC) refers to the EP approximation of the pseudo-posterior based on our Gaussian prior
(resp. Gaussian process prior). See also Figure 2 for ROC curve
comparisons, and Table 2 in the appendix for a CPU time comparison.

\begin{table}[h]
\begin{center}
\begin{tabular}{lcccccc}
\multicolumn{1}{c}{\bf Dataset}& \multicolumn{1}{c}{\bf Covariates}& \multicolumn{1}{c}{\bf Balance}&\multicolumn{1}{c}{\bf EP-AUC} & \multicolumn{1}{c}{\bf GPEP-AUC} & \multicolumn{1}{c}{\bf Logit} & \multicolumn{1}{c}{\bf Rankboost}
\\ \hline \\
Pima  &7  & 34\% &  0.8617 & 0.8557 & \bf 0.8646 & 0.8224\\
Credit & 60& 28\% & \bf 0.7952 & 0.7922 & 0.7561 & 0.788 \\
DNA  &180 &  22\%    & \bf 0.9814& 0.9812 & 0.9696 & \bf 0.9814 \\
SPECTF  &22 &     50\%   & 0.8684   & 0.8545 &\bf 0.8715 & 0.8684\\
Colon & 2000 & 40\% & 0.7034 & \bf 0.75 & 0.73 &0.5935\\
Glass & 10 & 1\% & \bf 0.9843 & 0.9629 & 0.9029 & 0.9436
\end{tabular}
\caption{Comparison of AUC.} \footnotesize{The Glass dataset has originally more than two classes. We compare the ``silicon'' class against all others.}
\label{sample-table}
\end{center}
\end{table}

Note how the GP approach performs better for the colon data, where the number
of covariates (2000) is very large, but the number of observations is only 40. It seems
also that EP gives a better approximation in this case because of the lower dimensionality of the pseudo-posterior (Figure \ref{fig:GP}).

\begin{figure}[h]
\label{fig:AUCcomp}
\begin{center}
\begin{tabular}{lll}
\subfloat[Rankboost vs EP-AUC on Pima]{\includegraphics[scale=0.19]{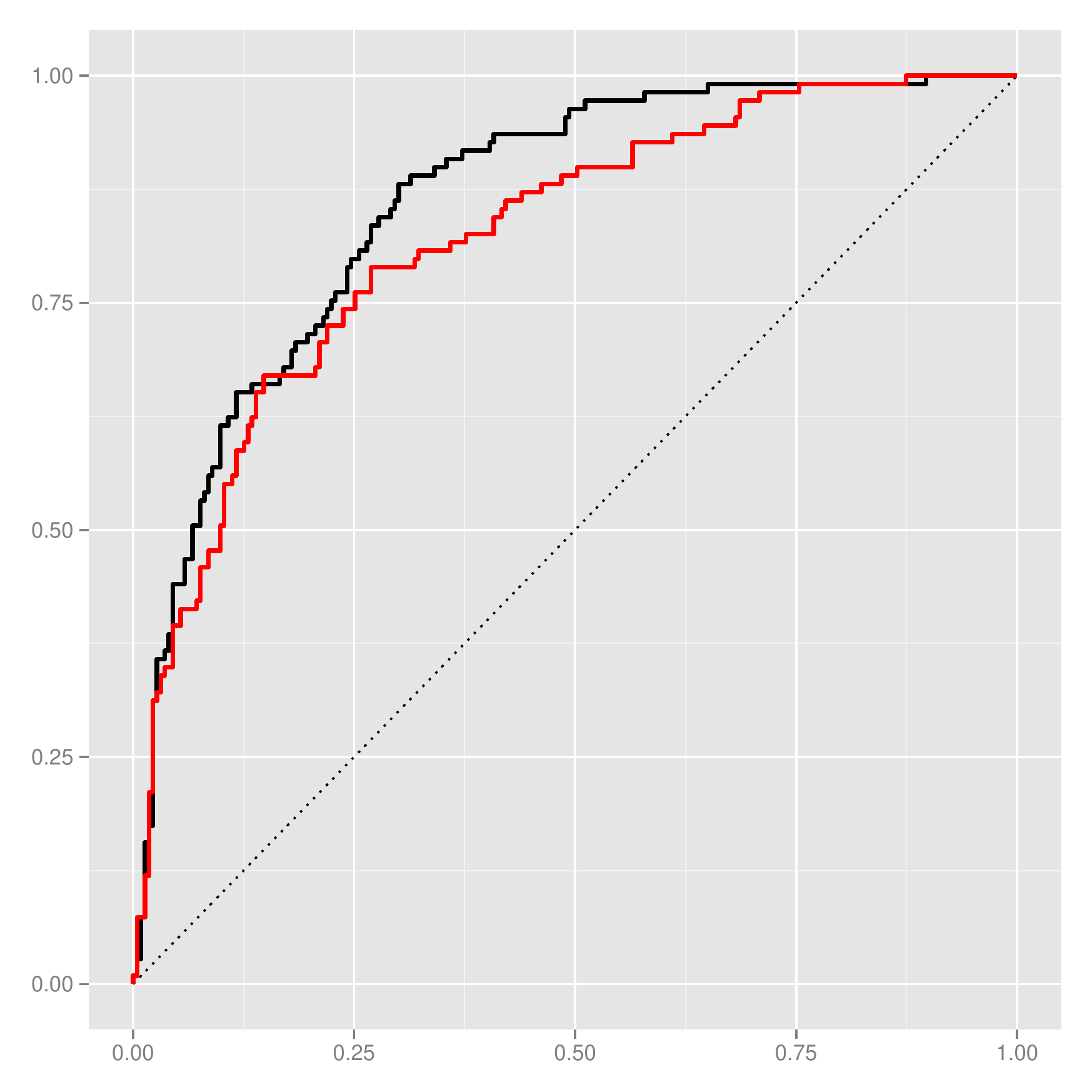}}
& 
\subfloat[Rankboost vs GPEP-AUC on Colon]{\includegraphics[scale=0.19]{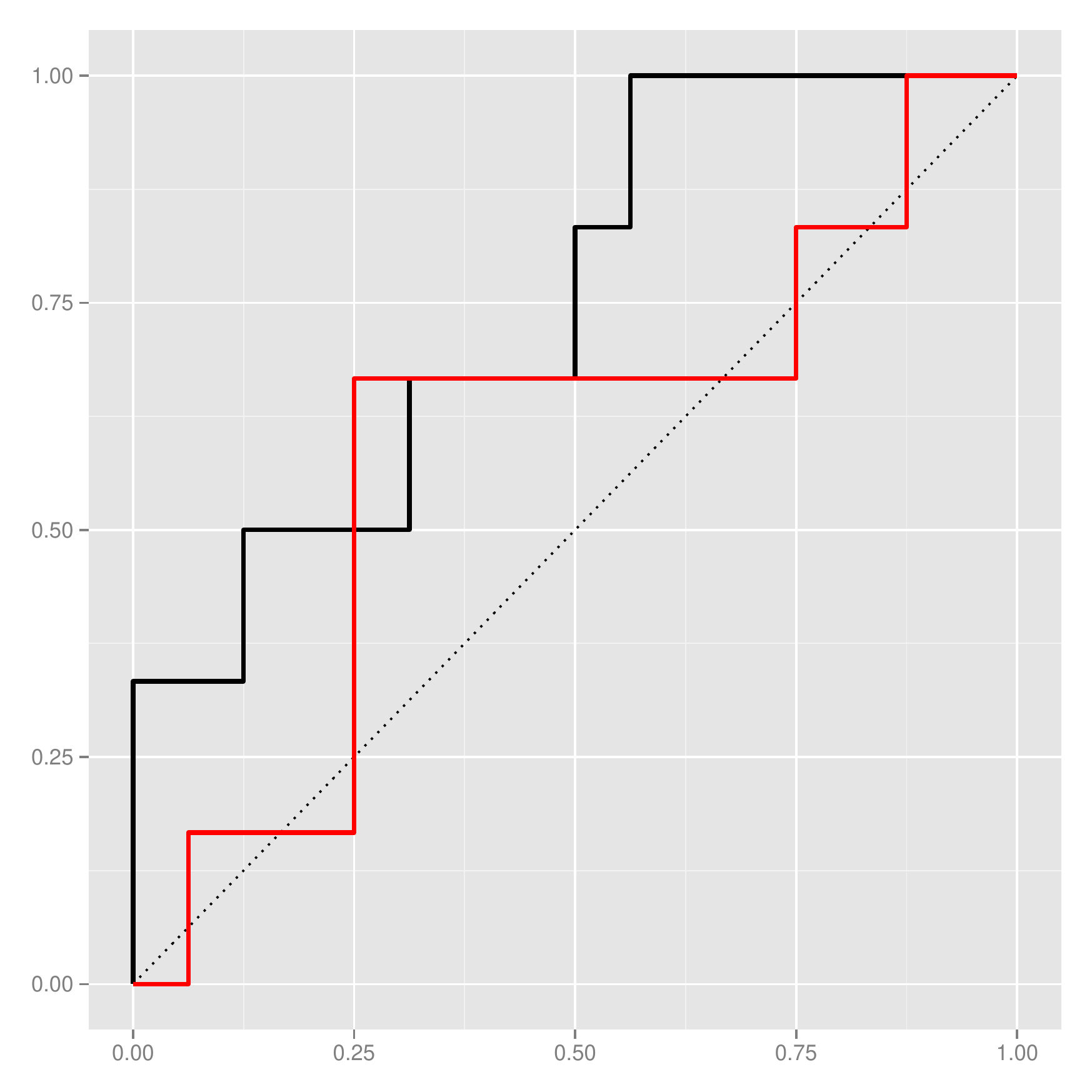}\label{fig:GP}} 
&
\subfloat[Logistic vs EP-AUC on Glass]{\includegraphics[scale=0.19]{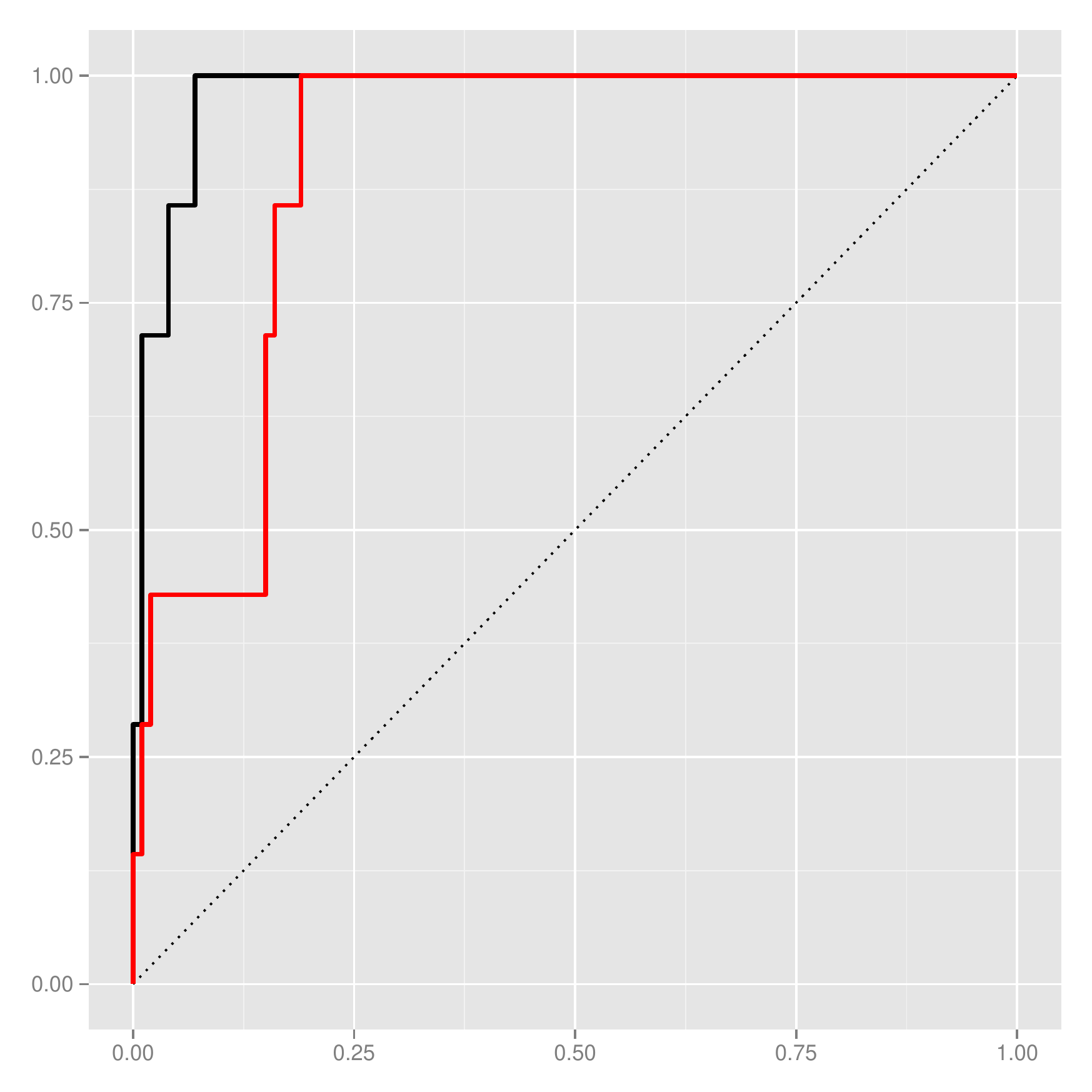}} 
\end{tabular}
\end{center}
\caption{Some ROC curves associated to the example described in a more systematic manner in table \ref{sample-table}. In black is always the PAC version.}
\end{figure}

Finally, we also investigate feature selection for the DNA dataset (180 covariates) using a spike and slab prior. The regularization plot (\ref{fig:regul}) shows how 
certain coefficients shrink to zero as the spike's variance $v_0$ goes to zero, allowing for some sparsity. The aim of a positive variance in the spike  is to absorb negligible effects into it \citep{Rovckova2013}. We observe this effect on figure \ref{fig:regul} where one of the covariates becomes positive when $v_0$ decreases. 
\begin{figure}[H]
\begin{center}
\begin{tabular}{ll}
\subfloat[Regularization plot]{\includegraphics[scale=0.19]{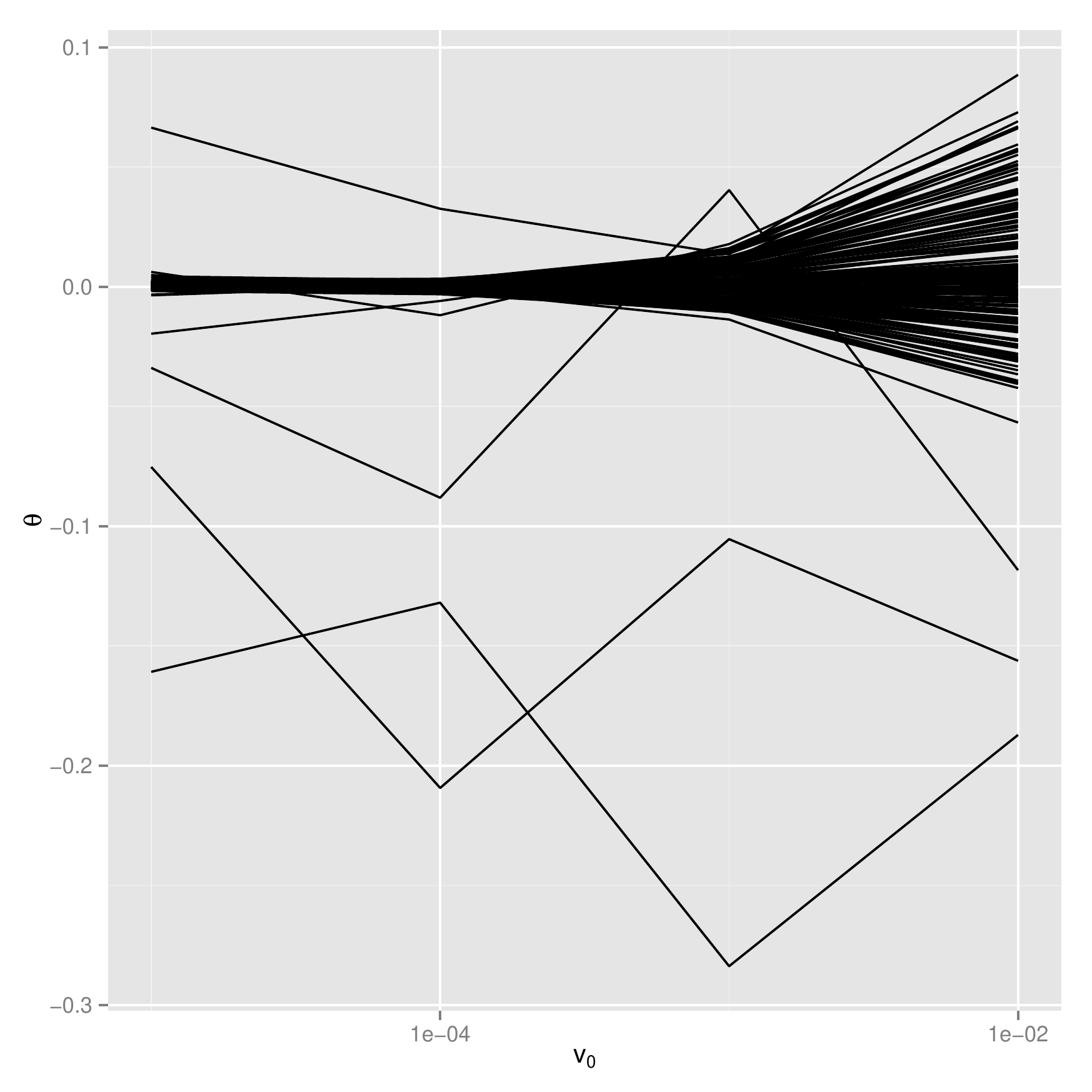}\label{fig:regul}}
& 
\subfloat[Estimate]{\includegraphics[scale=0.19]{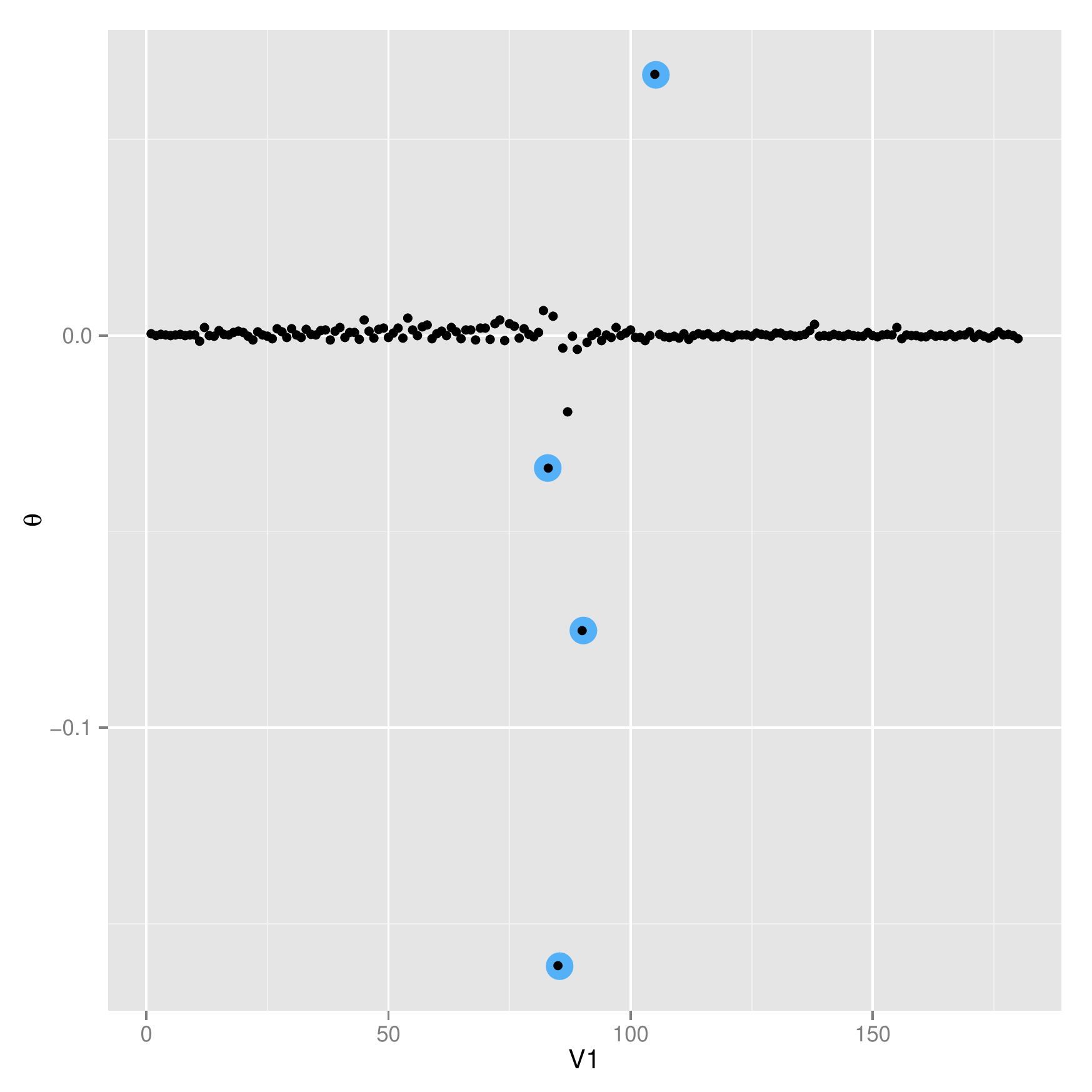} } 
\end{tabular}
\label{fig:FeatureS}
\end{center}
\caption{Regularization plot for $v_0\in\left[10^{-6},0.1\right]$ and estimation for $v_0=10^{-6}$ for DNA dataset; blue circles denote posterior probabilities $\geq0.5$.}
\end{figure}

\section{Conclusion}

The combination of the PAC-Bayesian theory and Expectation-Propagation 
leads to fast and efficient AUC classification algorithms, as observed on a variety of datasets, 
some of them very unbalanced. 
Future work may include extending our approach to more general ranking problems (e.g. multi-class), 
establishing non-asymptotic bounds in the nonparametric case, and reducing
the CPU time by considering only a subset of all the pairs of datapoints. 

\bibliography{biball}

\appendix

\section{PAC-Bayes bounds for linear scores}\label{sec:theory}


\subsection{Sufficient condition for \textrm{Dens}$\mathrm{(c)}$ }

A simple sufficient condition for \textrm{Dens}$\mathrm{(c)}$ to hold is that 
$(X_1-X_2)/\|X_1-X_2\|$ admits a probability density with respect to the spherical measure of dimension $d-1$ which is bounded above by $B$. Then
\begin{align*}
\mathbb{P}( \left<X_1 - X_2,\theta\right> \geq 0,
  \left<X_1-X_2,\theta'\right> \leq 0)
   & \leq B
   \frac{\arccos \left(\left<\theta,\theta'\right>\right)}{2\pi} \\
   & \leq \frac{B}{2\pi} \sqrt{5-5\left<\theta,\theta'\right>} \\
   & = \frac{B}{2\pi}\sqrt{\frac{5}{2}} \|\theta-\theta'\|.
\end{align*}

\subsection{Proof of Lemma 2.1}
In order to prove Lemma 2.1 we need the following Bernstein inequality.  
\begin{prop}[Bernstein's inequality for U-statistics]
\label{prop-bernstein}
For any $\gamma>0$, for any $\theta\in\mathbb{R}^d$,
$$ \mathbb{E} \exp[\gamma|R_n(\theta)-\overline{R}_n - R(\theta)+
\overline{R}|] \leq
 2 \exp\left[\frac{
\frac{\gamma^2}{n-1}
\mathbb{E}((q^{\theta}_{1,2})^2)
}{\left(1-\frac{4\gamma}{n-1}\right)}\right]. $$ 
\end{prop}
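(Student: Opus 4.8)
The plan is to reduce the exponential moment bound for the U-statistic $R_n(\theta)$ to a standard Bernstein-type inequality for sums of independent bounded random variables, via the classical Hoeffding decomposition of U-statistics into averages of sums of i.i.d. blocks. Write $g_\theta(x,y;x',y') = \ind\{(s_\theta(x)-s_\theta(x'))(y-y')<0\} - \ind\{(\sigma(x)-\sigma(x'))(y-y')<0\}$, so that $R_n(\theta)-\overline R_n = \frac{1}{n(n-1)}\sum_{i\neq j} g_\theta(Z_i,Z_j)$ with $Z_i = (X_i,Y_i)$, and $R(\theta)-\overline R = \E g_\theta(Z_1,Z_2)$. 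First I would symmetrize the kernel, setting $h_\theta = \tfrac12(g_\theta(z,z')+g_\theta(z',z))$, which does not change the U-statistic, and center it, $\tilde h_\theta = h_\theta - (R(\theta)-\overline R)$; note $q^\theta_{1,2} = \tilde h_\theta(Z_1,Z_2)$ up to the symmetrization, and $|\tilde h_\theta|\le 1$ by construction (differences of indicators).

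The key step is Hoeffding's averaging trick: a U-statistic with a symmetric kernel of order $2$ can be written as an average over permutations,
\[
U_n \eqdef \frac{1}{n(n-1)}\sum_{i\neq j}\tilde h_\theta(Z_i,Z_j)
= \frac{1}{n!}\sum_{\sigma\in\mathfrak{S}_n} \frac{1}{\lfloor n/2\rfloor}\sum_{k=1}^{\lfloor n/2\rfloor}\tilde h_\theta(Z_{\sigma(2k-1)},Z_{\sigma(2k)}),
\]
i.e. $U_n$ is a mixture (average) of averages of $\lfloor n/2\rfloor$ i.i.d. centered bounded terms. By convexity of $t\mapsto e^{\gamma t}$ and Jensen's inequality applied to the outer average over permutations,
\[
\E \exp(\gamma U_n) \le \E\exp\Bigl(\frac{\gamma}{m}\sum_{k=1}^{m}W_k\Bigr),\qquad m=\lfloor n/2\rfloor,
\]
where $W_1,\dots,W_m$ are i.i.d., each distributed as $\tilde h_\theta(Z_1,Z_2)$, hence centered, bounded by $1$ in absolute value, with variance $\E((q^\theta_{1,2})^2)$. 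Then I would apply the classical Bernstein inequality for the m.g.f. of a sum of independent centered random variables bounded by $b=1$: for $0<\lambda<3/b$ (here $\lambda=\gamma/m$),
\[
\E\exp\Bigl(\lambda\sum_{k=1}^m W_k\Bigr)\le \exp\Bigl(\frac{\lambda^2\, m\,\mathrm{Var}(W_1)/2}{1-\lambda b/3}\Bigr),
\]
which after substituting $\lambda=\gamma/m$ and using $m\ge (n-1)/2$ gives a bound of the form $\exp\{ c\gamma^2 \E((q^\theta_{1,2})^2)/[(n-1)(1-c'\gamma/(n-1))]\}$; the absolute-value version, and the factor $2$ on the right-hand side, come from applying the same bound to $-U_n$ and adding, since $\E e^{\gamma|U_n|}\le \E e^{\gamma U_n}+\E e^{-\gamma U_n}$. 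Tracking constants carefully (the symmetrization, the $m\ge(n-1)/2$ step, and the $1/3$ versus $1/4$ in the Bernstein denominator) should be arranged to land exactly on the stated constants $\gamma^2/(n-1)$ and $1-4\gamma/(n-1)$.

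The main obstacle I anticipate is purely in the bookkeeping of constants rather than in the structure of the argument: one must choose the variance-vs-range split in the elementary Bernstein m.g.f. bound (and possibly a slightly lossy bound $|\tilde h_\theta|\le 1$ versus a sharper range estimate) so that the final denominator is $1-4\gamma/(n-1)$ and the numerator coefficient is exactly $1$, and one must make sure the factor $\lfloor n/2\rfloor \ge (n-1)/2$ is used in the right direction. A secondary point to handle with care is that the Hoeffding decomposition requires a genuinely symmetric kernel, so the initial symmetrization step must be stated explicitly and one must check that $\E((q^\theta_{1,2})^2)$ — defined in the paper with the asymmetric $q^\theta_{i,j}$ — dominates (up to the constant already absorbed) the variance of the symmetrized kernel, which holds since symmetrization can only decrease variance.
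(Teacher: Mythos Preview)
Your approach is exactly the one the paper uses: split $\E e^{\gamma|U_n|}\le \E e^{\gamma U_n}+\E e^{-\gamma U_n}$, apply Hoeffding's permutation averaging to write $U_n$ as a convex combination of i.i.d.\ block averages of length $m=\lfloor n/2\rfloor$, push the expectation through by Jensen, apply a Bernstein m.g.f.\ bound to each block, and finish with $m\ge (n-1)/2$.

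Two small corrections. First, the symmetrisation step is unnecessary: the kernel is already symmetric because $\langle\theta,X_i-X_j\rangle(Y_i-Y_j)=\langle\theta,X_j-X_i\rangle(Y_j-Y_i)$, so $q^\theta_{i,j}=q^\theta_{j,i}$ and your $\tilde h_\theta$ is literally $q^\theta_{1,2}$. Your concern about the variance of the symmetrised kernel versus $\E((q^\theta_{1,2})^2)$ therefore evaporates. Second, the range bound $|\tilde h_\theta|\le 1$ is not right: the difference of indicators lies in $\{-1,0,1\}$ but you then subtract $R(\theta)-\overline R\in[0,1]$, so $q^\theta_{1,2}\in[-2,1]$ and the correct uniform bound is $2$. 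This is precisely what produces the stated constants: in Massart's form of Bernstein (the one the paper invokes), the moment condition $\E|W|^k\le \E(W^2)\,2^{k-2}$ gives $\log\E e^{\lambda W}\le \tfrac{\lambda^2\E(W^2)}{2(1-2\lambda)}$, and after $\lambda=\gamma/m$, $m\ge (n-1)/2$, the $2\lambda$ becomes $4\gamma/(n-1)$ while the factor $2$ in the denominator cancels the $2$ from $\gamma^2/m\le 2\gamma^2/(n-1)$. With $b=1$ and the $1/3$-denominator form you would not land on the claimed expression.
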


\noindent {\it Proof of Proposition~\ref{prop-bernstein}}.
Fix $\theta$. Remember that
$$q^{\theta}_{i,j}=\mathbf{1}\{\left<\theta,X_i - X_j\right>(Y_i-Y_j)<0\} 
- \mathbf{1}\{[\sigma(X_i)-\sigma(X_j)](Y_i-Y_j)<0\} 
- R(\theta)+\overline{R}$$ so that
 $$ U_n := R_n(\theta)-\overline{R}_n - R(\theta)+
\overline{R} = \frac{1}{n(n-1)}\sum_{i\neq j} q^{\theta}_{i,j} .$$
First, note that 
$$ \mathbb{E} \exp[\gamma |U_n|]
\leq \mathbb{E} \exp[\gamma U_n] + \mathbb{E} \exp[\gamma (-U_n)].
$$
We will only upper bound the first term in the r.h.s., as the upper bound
for the second term may be obtained exactly in the same way (just replace $q^{\theta}_{i,j}$
by $-q^{\theta}_{i,j}$).
Now, use Hoeffding's decomposition \cite{Hoeffding1948}:
this is the technique used by Hoeffding to prove inequalities on U-statistics.
Hoeffding proved that
$$
U_n = \frac{1}{n!}\sum_{\pi} \frac{1}{\lfloor \frac{n}{2}\rfloor}
\sum_{i=1}^{\lfloor \frac{n}{2}\rfloor} q^{\theta}_{\pi(i),\pi(i+\lfloor \frac{n}{2}\rfloor)}
$$
where the sum is taken over all the permutations $\pi$ of $\{1,\dots,n\}$.
Jensen's inequality leads to
\begin{align*}
\mathbb{E} \exp[\gamma U_n]
& = \mathbb{E} \exp\left[\gamma 
\frac{1}{n!}\sum_{\pi} \frac{1}{\lfloor \frac{n}{2}\rfloor}
\sum_{i=1}^{\lfloor \frac{n}{2}\rfloor} q^{\theta}_{\pi(i),\pi(i+\lfloor \frac{n}{2}\rfloor)}
\right] \\
& \leq \frac{1}{n!}\sum_{\pi} \mathbb{E} \exp\left[
\frac{\gamma}{\lfloor \frac{n}{2}\rfloor}
\sum_{i=1}^{\lfloor \frac{n}{2}\rfloor} q^{\theta}_{\pi(i),\pi(i+\lfloor \frac{n}{2}\rfloor)}
\right].
\end{align*}
We now use, for each of the terms in the sum, Massart's version of Bernstein's
inequality \cite{Massart2007} (ineq. (2.21) in Chapter 2, the assumption
is checked by $q^{\theta}_{\pi(i),\pi(i+\lfloor \frac{n}{2}\rfloor)}\in[-2,2]$ so
$\mathbb{E}((q^{\theta}_{\pi(i),\pi(i+\lfloor \frac{n}{2}\rfloor)})^k) \leq
\mathbb{E}((q^{\theta}_{\pi(i),\pi(i+\lfloor \frac{n}{2}\rfloor)})^2) 2^{k-2}$). We obtain:
$$
 \mathbb{E} \exp\left[
\frac{\gamma}{\lfloor \frac{n}{2}\rfloor}
\sum_{i=1}^{\lfloor \frac{n}{2}\rfloor} q^{\theta}_{\pi(i),\pi(i+\lfloor \frac{n}{2}\rfloor)}
\right]
\leq
\exp\left[\frac{
\mathbb{E}((q^{\theta}_{\pi(1),\pi(1+\lfloor \frac{n}{2}\rfloor)})^2)
\frac{\gamma^2}{\lfloor \frac{n}{2}\rfloor}
}{2\left(1-2\frac{\gamma}{\lfloor \frac{n}{2}\rfloor}\right)}\right].
$$
First, note that we have the inequality $\lfloor \frac{n}{2}\rfloor\geq (n-1)/2$. Then,
remark that as the pairs $(X_i,Y_i)$ are iid, we have
$\mathbb{E}((q^{\theta}_{\pi(1),\pi(1+\lfloor \frac{n}{2}\rfloor)})^2)
= \mathbb{E}((q^{\theta}_{1,2})^2)$ so
we have a simpler inequality
$$
 \mathbb{E} \exp\left[
\frac{\gamma}{\lfloor \frac{n}{2}\rfloor}
\sum_{i=1}^{\lfloor \frac{n}{2}\rfloor} q^{\theta}_{\pi(i),\pi(i+\lfloor \frac{n}{2}\rfloor)}
\right]
\leq
\exp\left[\frac{
\mathbb{E}((q^{\theta}_{1,2})^2)
\frac{\gamma^2}{n-1}
}{\left(1-\frac{4\gamma}{n-1}\right)}\right].
$$
This ends the proof of the proposition. $\square$\\\vspace*{5mm}
The following proposition is also of use in the proof of lemma 2.1.
\begin{prop}
For any measure $\rho\in\mathcal{M}_+^1(\Theta)$ and any measurable function $h:\theta\rightarrow \R$ such that $\int \exp(h(\theta))\pi(\dd\theta)<\infty$, we have
$$
\log\left(\int \exp(h(\theta))\pi(\theta)\right)=\sup_{\rho\in\mathcal{M}_+^1}\left(\int h(\theta)\rho(\dd\theta)-\mathcal{K}(\rho,\pi)\right).
$$
In addition if $h$ is bounded by above on the support of $\pi$ the supremum is reached for the Gibbs distribution,
$$
\rho(\dd\theta)\propto \exp\left(h(\theta)\right)\pi(\dd\theta).
$$
\label{prop:Legendre}
\end{prop}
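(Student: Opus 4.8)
The identity to be proved is the classical Donsker--Varadhan variational formula for relative entropy, and the plan is a change-of-measure (``completing the square'') argument. Write $g \eqdef \int \exp(h(\theta))\,\pi(\dd\theta)$, which lies in $(0,\infty)$ by hypothesis, and introduce the tilted probability measure $\pi_h(\dd\theta) \eqdef g^{-1}\exp(h(\theta))\,\pi(\dd\theta)$; note that $\pi_h$ and $\pi$ are mutually absolutely continuous, since their mutual density is everywhere positive. First I would discard the degenerate competitors: if $\rho \not\ll \pi$ then $\mathcal{K}(\rho,\pi) = \infty$, so that $\int h\,\dd\rho - \mathcal{K}(\rho,\pi) = -\infty$ and such $\rho$ are irrelevant to the supremum; it therefore suffices to treat $\rho \ll \pi$ with $\mathcal{K}(\rho,\pi) < \infty$.

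For such a $\rho$, the chain rule gives $\dd\rho/\dd\pi_h = (\dd\rho/\dd\pi)\,(g\,e^{-h})$; taking logarithms and integrating against $\rho$ yields the key identity
$$
\mathcal{K}(\rho,\pi_h) = \mathcal{K}(\rho,\pi) + \log g - \int h(\theta)\,\rho(\dd\theta) .
$$
A small preliminary point is needed to make this manipulation rigorous, namely that $\int h\,\dd\rho$ is well defined and $< +\infty$; this follows from Young's inequality $ab \le a\log a - a + e^{b}$ applied with $a = \dd\rho/\dd\pi$ and $b = h$, which after integration gives $\int h\,\dd\rho \le \mathcal{K}(\rho,\pi) - 1 + g < \infty$, hence $\int h^{+}\,\dd\rho < \infty$ and $\int h\,\dd\rho \in [-\infty,\infty)$. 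Since $\mathcal{K}(\rho,\pi_h)\ge 0$ with equality iff $\rho = \pi_h$, rearranging the identity gives
$$
\int h(\theta)\,\rho(\dd\theta) - \mathcal{K}(\rho,\pi) = \log g - \mathcal{K}(\rho,\pi_h) \le \log g ,
$$
with equality precisely when $\rho = \pi_h$. This already establishes $\sup_{\rho}\{\cdots\}\le \log g$.

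For the reverse inequality I would distinguish two cases. If $h$ is bounded above on $\mathrm{supp}(\pi)$, then $\pi_h$ is an admissible competitor with $\mathcal{K}(\pi_h,\pi)<\infty$ (finiteness is immediate from $h\le M$ together with the elementary bound $\int h^{-}\,\dd\pi_h \le 1/(eg)$), so substituting $\rho = \pi_h$ in the displayed equality shows the supremum is attained at the Gibbs distribution and equals $\log g$; this is exactly the second assertion. In the general case the maximiser $\pi_h$ may have infinite relative entropy, and I would instead truncate: put $h_M \eqdef \min(h,M)$, apply the already-proved bounded-above case to $h_M$ to get $\log\!\int e^{h_M}\,\dd\pi = \int h_M\,\dd\pi_{h_M} - \mathcal{K}(\pi_{h_M},\pi) \le \int h\,\dd\pi_{h_M} - \mathcal{K}(\pi_{h_M},\pi) \le \sup_{\rho}\{\cdots\}$, and then let $M\to\infty$, invoking monotone convergence to get $\int e^{h_M}\,\dd\pi \uparrow g$. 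This forces $\log g \le \sup_{\rho}\{\cdots\}$ and completes the proof.

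There is no delicate inequality to uncover here: the entire content is the change-of-measure identity, and the only real work is the bookkeeping around integrability --- making sure $\int h\,\dd\rho$ and $\mathcal{K}(\rho,\pi_h)$ are handled correctly when they may be infinite --- together with the observation that, when $h$ is unbounded above, $\pi_h$ can fail to have finite relative entropy, which is why the truncation step is required for the equality claim even though it is not needed for the attainment claim.
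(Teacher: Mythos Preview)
Your proof is correct and self-contained: the change-of-measure identity $\mathcal{K}(\rho,\pi_h)=\mathcal{K}(\rho,\pi)+\log g-\int h\,\dd\rho$ is precisely the heart of the matter, and your integrability bookkeeping (Young's inequality for $\int h\,\dd\rho$, the bound $\int h^{-}\,\dd\pi_h\le 1/(eg)$, and the truncation $h_M$ to handle unbounded $h$) is carefully done. The paper itself does not give a proof of this proposition at all --- it simply cites \cite{Catoni2007} for this standard Donsker--Varadhan formula --- so your argument goes substantially beyond what the paper provides, and there is nothing to compare against.
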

\begin{proof}
e.g. \cite{Catoni2007}.
\end{proof}

\paragraph{Proof of Lemma 2.1}

From the proof of Proposition \ref{prop-bernstein}, and using the short-hand $q_\theta$ for $q^\theta_{1,2}$, we deduce
\begin{equation}
\label{eq:Bernstein}
\mathbb{E}\left[\exp\lbrace\rho\left(\gamma(R_n(\theta)-\bar{R}_n-R(\theta)+\bar{R})\rbrace+\eta(\theta)\right)\right]
\leq \exp\left(\frac{\gamma^2}{n-1}\frac{\rho\left(\mathbb{E}q_\theta^2\right)}{(1-4\frac{\gamma}{n-1})}+\rho\left(\eta(\theta)\right)\right).
\end{equation}

Using proposition \ref{prop:Legendre}, and the fact that $e^x\geq \ind\{x\geq0\}$ we have that
\begin{align*}
&\mathbb{P}\lbrace\sup_{\rho\in\mathcal{M}^1_+(\Theta)} \rho\left(\gamma(R_n(\theta)-\bar{R}_n-R(\theta)+\bar{R})-\eta(\theta)\right)-\mathcal{K}(\rho,\pi)\geq 0\rbrace\\
&\leq\mathbb{E}\left(\pi\lbrace\exp\lbrace\rho\left(\gamma(R_n(\theta)-\bar{R}_n-R(\theta)+\bar{R})-\eta(\theta)\rbrace\right)\right)\rbrace\\
&=\pi\left(\mathbb{E}\lbrace\exp\lbrace\rho\left(\gamma(R_n(\theta)-\bar{R}_n-R(\theta)+\bar{R})-\eta(\theta)\rbrace\right)\right)\rbrace\quad\text{, by Fubini} \\
&\leq \pi\left\{\exp\left(\frac{\gamma^2 \rho(\mathbb{E}q_\theta^2)}{(n-1)(1-\frac{4\gamma}{n-1})}-\rho(\eta(\theta))\right)\right\}\quad\text{, using \eqref{eq:Bernstein}.}
\end{align*}
In the following we take $\eta(\theta)=\log\frac1\epsilon +\frac{\gamma^2}{n-1}
\frac{\rho(\mathbb{E}q_\theta^2)}{(1-4\frac\gamma{n-1})}$ leading to the following result with probability at least $1-\epsilon$, $\forall \rho \in \mathcal{M}_+^1(\Theta)$:
\begin{equation}
\rho(R_n(\theta))-\bar{R}_n\leq\rho(R(\theta))-\bar{R}
+\frac{\mathcal{K}(\rho,\pi)+\log\frac1\epsilon}{\gamma}+
\frac{\gamma}{n-1}\frac{\rho(\mathbb{E}q_\theta^2)}{(1-4\frac\gamma{n-1})}.
\label{Empbound}
\end{equation}
Under $\textbf{MA}(1,C)$ we can write:
\[
\rho(R_n(\theta))-\bar{R}_n\leq\left(1+\frac{\gamma C}{n-1} \frac1{(1-\frac4{ n-1})}\right)\left(\rho(R(\theta))-\bar{R}\right)+\frac{\mathcal{K}(\rho,\pi)+\log\frac1\epsilon}{\gamma}.
\]
Using Bernstein's inequality in the symmetric case, with probability $1-\epsilon$ we can assert that:
\[
 \left(1-\frac{\gamma C}{n-1} \frac1{(1-\gamma\frac4{ n-1})}\right)\left(\rho(R(\theta))-\bar{R}\right)
 \leq\rho(R_n(\theta))-\bar{R}_n+\frac{\mathcal{K}(\rho,\pi)+\log\frac1\epsilon}{\gamma}.
\]
The latter is true in particular for $\rho=\pi(\theta\vert \mathcal{S})$, the Gibbs posterior:
\[
 \left(1-\frac{\gamma C}{n-1} \frac1{(1-\gamma\frac4{ n-1})}\right)\left(\int_{\Theta} R(\theta)\pi_\gamma(d\theta\vert\data)-\bar{R}\right)
 \leq\inf_{\rho\in\mathcal{M}_+^1} \left\{\rho(R_n(\theta))-\bar{R}_n+\frac{\mathcal{K}(\rho,\pi)+\log\frac1\epsilon}{\gamma} \right\}.
\]

Making use of equation \eqref{Empbound} and the fact that $\gamma\leq(n-1)/8C$
we have with probability $1-2\epsilon$:
\[
\left(\int_{\Theta} R_n(\theta)\pi_\gamma(d\theta\vert\data)-\bar{R}_n\right)
\leq 2\inf_{\rho\in\mathcal{M}_+^1}\left(\rho(R(\theta))-\bar{R}+2\frac{\mathcal{K}(\rho,\pi)+\log\frac1\epsilon}{\gamma}\right).\qquad \square
\]

Lemma 2.1 gives some approximately correct finite sample bound under hypothesis {\bf MA}$(1,C)$. It is easy to extend those results to the more general case of {\bf MA}$(\infty,C)$. Note in particular that this assumption is always satisfied for  $C=4$. 

\paragraph{Proof of Lemma 2.2}

First consider in our case that, the margin assumption is always true for $C=4$, $\mathbb{E}(q^2_\theta)\leq 4$, the rest of the proof is similar to that of lemma 2.1. From equation \eqref{Empbound} with the above hypothesis:
\[
\rho(R_n(\theta))-\bar{R}_n\leq\rho(R(\theta))-\bar{R}
+\frac{\mathcal{K}(\rho,\pi)+\log\frac1\epsilon}{\gamma}+\frac{4\gamma}{n-1} \frac1{(1-\frac4{ n-1})}
\]

From the Bernstein inequality with in the symmetric case we get with probability $1-\epsilon$:

\[
\rho(R(\theta))-\bar{R}\leq\rho(R_n(\theta))-\bar{R}_n
+\frac{\mathcal{K}(\rho,\pi)+\log\frac1\epsilon}{\gamma}+\frac{4\gamma}{n-1} \frac1{(1-\frac4{ n-1})}
\]

We get,  after noting that the Gibbs posterior can be written as an infimum (Legendre transform), with probability $1-2\epsilon$:
\[
\int(R(\theta) \pi_\gamma(d\theta\vert \data)-\bar{R}
\leq\inf_{\rho\in \mathcal{M}_+^1(\Theta)}\rho(R(\theta))-\bar{R}
+\frac{\mathcal{K}(\rho,\pi)+\log\frac1\epsilon}{\gamma}
+\frac{16\gamma}{n-1}
\]
(we also used $\gamma\leq (n-1)/8$).

$\square$\linebreak[4]

The two above lemma depend on some class complexity $\mathcal{K}(\rho,\pi)$. The latter can be specialized to different choice of prior measure $\pi$. In the following we propose two specifications to a Gaussian prior and a spike and slab prior.

\subsection{Proof of Theorem 2.3 (Independent Gaussian prior)}

For any $\theta_0 \in\mathbb{R}^{p}$ with $\|\theta_0\|=1$ and $\delta>0$ we put
$$ \rho_{\theta_0,\delta}({\rm d}\theta) \propto
 \mathbf{1}_{\|\theta-\theta_0\|\leq \delta} \pi({\rm d}\theta) .$$
Then we have, from Lemma 2.1, with probability at least
$1-\varepsilon$,
\begin{equation*}
\int R(\theta) \pi_{\gamma}({\rm d}\theta|\data) - \overline{R}
\leq 2
\inf_{\theta_0,\delta} \left\{
\int R(\theta) \rho_{\theta_0,\delta}({\rm d}\theta)- \overline{R}
+ 16C \frac{\mathcal{K}(\rho_{\theta_0,\delta},\pi) + \log\left(
 \frac{4}{\varepsilon}\right)}{(n-1)}
\right\}
\end{equation*}
First, note that
\begin{align*}
R(\theta) & = \mathbb{E}\left(\ind\{\left<\theta,X - X'\right>(Y-Y')<0\}\right) \\
 & = \mathbb{E}\left(\ind\{\left<\theta_0,X - X'\right>(Y-Y')<0\}\right)  \\
 & \quad  + \mathbb{E}\left(\ind\{\left<\theta,X - X'\right>(Y-Y')<0\}
         - \ind\{\left<\theta_0,X - X'\right>(Y-Y')<0\} \right) \\
 & \leq \mathbb{R}(\theta_0) + \mathbb{P}({\rm sign}\left<\theta,X - X'\right>(Y-Y')
    \neq {\rm sign} \left<\theta_0,X - X'\right>(Y-Y')) \\
  & = \mathbb{R}(\theta_0) + \mathbb{P}({\rm sign}\left<\theta,X - X'\right>
    \neq {\rm sign} \left<\theta_0,X - X'\right>) \\
  & \leq R(\theta_0) + c\left\| \frac{\theta}{\|\theta\|}-\theta_0\right\| \\
  & \leq R(\theta_0) + 2 c \|\theta-\theta_0\|.
\end{align*}
As a consequence
$ \int R(\theta) \rho_{\theta_0,\delta}({\rm d}\theta)
 \leq R(\theta_0) + 2c \delta .$
 
The next step is to calculate $\mathcal{K}(\rho_{\theta_0,\delta},\pi)$. We have
$$
\mathcal{K}(\rho_{\theta_0,\delta},\pi)
= \log \frac{1}{\pi\left(\left\{\theta:\|\theta-\theta_0\|\leq \delta\right\}\right)}.
$$
Assuming that $\theta_{0,1}>0$ (the proof is exactly symmetric in the other case) 
\begin{align*}
-\mathcal{K}(\rho_{\theta_0,\delta},\pi)&=\log\pi\left(\{ \theta :\sum_{i=1}^d(\theta_i-\theta_{0,i})^2\leq \delta^2\}\right)\\
&\geq d\log\pi\left(\{ \theta :(\theta_1-\theta_{0,1})^2\leq \frac{\delta^2}{d}\}\right)\\
   & \geq d \log \int_{\frac{\theta_{0,1}}{\sqrt{\vartheta}}-\frac{\delta}{\sqrt{\vartheta d}}}^{
         \frac{\theta_{0,1}}{\sqrt{\vartheta}}+\frac{\delta}{\sqrt{\vartheta d}}} \varphi_{(0,1)}(x) {\rm d}x \\
   & \geq d\log\left(\frac{ \delta}{2 \sqrt{\vartheta d}} \varphi\left(\frac{\theta_{0,1}}{\sqrt{\vartheta}}+\frac{\delta}{\sqrt{\vartheta d}}\right)\right) \\
   & \geq d\log\left(\frac{ \delta}{2 \sqrt{\vartheta d}} \varphi\left(\frac{1}{\sqrt{\vartheta}}+\frac{\delta}{\sqrt{\vartheta d}}\right)\right)\\
   & = d\log\left(\frac{ \delta}{2 \sqrt{2 \pi \vartheta d}} \exp\left[-\frac{1}{2}\left(\frac{1}{\sqrt{\vartheta}}+\frac{\delta}{\sqrt{\vartheta d}}\right)^2\right]\right) \\
&\geq d\log\left\{\frac\delta{2\sqrt{2\pi \vartheta d }}\exp\left(-\frac1\vartheta - \frac{\delta^2}{\vartheta d}\right)\right\}
\end{align*}
\begin{align*}
\mathcal{K}(\rho_{\theta_0,\delta},\pi)&\leq -d\log\{\delta\}+\frac d2\log\{8\pi \vartheta d  \}+\frac1\vartheta + \frac{\delta^2}{\vartheta d}
	\end{align*}

And we can  plug the equation above in the result of lemma 2.1 with $\delta=\frac1n$
\begin{align*}
\int R(\theta)\pi_\gamma(\theta\vert \data)-\bar{R}\leq
2\inf_{\theta_0}\left(R(\theta_0)-\bar{R}+2c\frac1n+\frac2\gamma
\left(d\log\{n\}+\frac d2\log\{8\pi \vartheta d  \}+\frac1\vartheta + \frac{\frac1{n^2}}{\vartheta d}+\log\frac4\epsilon \right)\right)
\end{align*}
Any $\gamma=O(n)$ will lead to a convergence result. Taking $\gamma=(n-1)/8C$ and optimizing in $\vartheta$ we obtain a variance of $\vartheta=\frac {2(1+\frac1{n^2d})}d$.

\subsection{Proof of Theorem 2.4 (Independent Gaussian prior)}

As was done for the previous lemmas we can lift the {\bf MA}$(\infty,C)$ and use the lemma 2.2 instead, which gives rise to Theorem 2.4.

Use Lemma 2.2 and the same steps as in the proof of Theorem \ref{thm-one-ind}, optimize
w.r.t. $\gamma$ and $\vartheta$ to get the result.

We show the same kind of result in the following but for spike and slab priors.

\subsection{Proof of Theorem 2.5 (Spike and slab prior for feature selection)}

As for the proof of theorem \ref{thm-one-ind} we start by defining, 
for any $\theta_0 \in\mathbb{R}^{p}$ with $\|\theta_0\|=1$ and $\delta>0$, 
$$ \rho_{\theta_0,\delta}({\rm d}\theta) \propto
 \mathbf{1}_{\|\theta-\theta_0\|\leq \delta} \pi({\rm d}\theta) $$
so that in the end, by a similar argument as previously it remains only to upper bound the following quantity, 
$$
\mathcal{K}(\rho_{\theta_0,\delta},\pi)
= \log \frac{1}{\pi\left(\left\{\theta:\|\theta-\theta_0\|\leq \delta\right\}\right)}.
$$
Let $\pi_0$ denote the probability distribution such that the $\theta_i$ are iid $\mathcal{N}(0,v_0)$.
So:
\begin{align*}
 -\mathcal{K}(\rho_{\theta_0,\delta},\pi)
 & = \log \pi\left(\left\{\theta: \sum_{i=1}^d (\theta_i-\theta_{0,i})^2 \leq \delta^2 \right\}\right) \\
 & \geq \log \pi\left(\left\{\theta: \forall i, (\theta_i-\theta_{0,i})^2 \leq \frac{\delta^2}{d} \right\}\right) \\
 & = \sum_{i:\theta_{0,i}\neq 0}\log \pi\left(\left\{(\theta_i-\theta_{0,i})^2 \leq \frac{\delta^2}{d} \right\}\right) \\
 & \quad \quad   + \log \pi \left(\left\{\forall i \text{ with }\theta_{0,i}=0,
            \theta_i^2 < \frac{\delta^2}{d} \right\}\right) \\
 & \geq \sum_{i:\theta_{0,i}\neq 0}\log \pi\left(\left\{(\theta_i-\theta_{0,i})^2 \leq \frac{\delta^2}{d} \right\}\right) \\
 & \quad \quad   + \log \pi_{0} \left(\left\{\forall i \text{ with }\theta_{0,i}=0,
            \theta_i^2 < \frac{\delta^2}{d} \right\}\right) + d\log(1-p) \\
 & = \sum_{i:\theta_{0,i}\neq 0}\log \pi\left(\left\{(\theta_i-\theta_{0,i})^2 \leq \frac{\delta^2}{d} \right\}\right) \\
 & \quad \quad   + \log \left[1-\pi_0 \left(\left\{\exists i,\theta_{0,i}=0,
            \theta_i^2 > \frac{\delta^2}{d} \right\}\right) \right] + d\log(1-p) \\
 & \geq \sum_{i:\theta_{0,i}\neq 0}\log \pi\left(\left\{(\theta_i-\theta_{0,i})^2 \leq \frac{\delta^2}{d} \right\}\right) \\
 & \quad \quad   +  \log \left[1-\sum_{i:\theta_i=0} \pi_0 \left(\left\{
       \theta_i^2 > \frac{\delta^2}{d} \right\}\right) \right] + d\log(1-p).
\end{align*}
Assume first that $i$ is such that
$\theta_{0,i}= 0$. Then:
\begin{align*}
\pi_0 \left(\left\{ \theta_i^2 > \frac{\delta^2}{d} \right\}\right)
   & = \pi_0 \left(\left\{ \left|\frac{\theta_i}{\sqrt{v_0}}\right| > \frac{\delta}{\sqrt{v_0 d}} \right\}\right) \\
   & \leq \exp\left( - \frac{\delta^2}{2 v_0 d} \right),
\end{align*}
and so
$$
\sum_{i:\theta_{0,i}=0} \pi_0 \left(\left\{
       \theta_i^2 > \frac{\delta^2}{d} \right\}\right) \leq
d \exp\left( - \frac{\delta^2}{2 v_0 d} \right) \leq \frac{1}{2}
$$
as soon as $v_0 \leq \delta^2/(2d \log(d))$. Then, assume that $i$ is such that
$\theta_{0,i}\neq 0$. Now assume that $\theta_{0,i}>0$ (the proof is exactly symmetric if $\theta_{0,i}<0$):
\begin{align*}
 \pi\left(\left\{\theta: (\theta_i-\theta_{0,i})^2 \leq \frac{\delta^2}{d}\right\}\right) 
   & \geq p \int_{\frac{\theta_{0,i}}{\sqrt{v_1}}-\frac{\delta}{\sqrt{v_1 d}}}^{
         \frac{\theta_{0,i}}{\sqrt{v_1}}+\frac{\delta}{\sqrt{v_1 d}}} \varphi_{(0,1)}(x) {\rm d}x \\
   & \geq \frac{p \delta}{2 \sqrt{v_1 d}} \varphi\left(\frac{\theta_{0,i}}{\sqrt{v_1}}+\frac{\delta}{\sqrt{v_1 d}}\right) \\
   & \geq \frac{p \delta}{2 \sqrt{v_1 d}} \varphi\left(\frac{1}{\sqrt{v_1}}+\frac{\delta}{\sqrt{v_1 d}}\right) \\
   & = \frac{p \delta}{2 \sqrt{2 \pi v_1 d}} \exp\left[-\frac{1}{2}\left(\frac{1}{\sqrt{v_1}}+\frac{\delta}{\sqrt{v_1 d}}\right)^2\right] \\
   & \geq \frac{ p \delta}{2 \sqrt{2 \pi v_1 d}} \exp\left[-\frac{1}{v_1}- \frac{\delta^2}{v_1 d}\right].
\end{align*}
Putting everything together:
\begin{align*}
 \mathcal{K}(\rho_{\theta_0,\delta},\pi)
 & \leq - \|\theta_0\|_0 \log\left(\frac{ p \delta}{2 \sqrt{2 \pi v_1 d}}
       \exp\left[-\frac{1}{v_1}- \frac{\delta^2}{v_1 d}\right]\right) + \log(2) + d\log\frac{1}{1-p} \\
 & = \|\theta_0\|_0 \left[ \log\left(\frac{2 \sqrt{2 \pi v_1 d}}{ p \delta}\right) + \frac{1}{v_1}
                 + \frac{\delta^2}{v_1 d} \right]+ \log(2) + d\log\frac{1}{1-p}.
\end{align*}
So, we have:
\begin{multline*}
\int R(\theta) \pi_{\gamma}({\rm d}\theta|\data) - \overline{R}
\leq 2
\inf_{\theta_0,\delta} \Biggl\{
 R(\theta_0) - \overline{R} + 2c\delta
\\
+ 16C \frac{\|\theta_0\|_0 \left[ \log\left(\frac{2 \sqrt{2 \pi v_1 d}}{ p \delta}\right) + \frac{1}{v_1}
                 + \frac{\delta^2}{v_1 d} \right]+ \log(2) + d\log\frac{1}{1-p} + \log\left(
 \frac{4}{\varepsilon}\right)}{(n-1)}
\Biggr\}
\end{multline*}

\section{Practical implementation of the PAC-Bayesian approach}\label{sec:practical}


\subsection{Sequential Monte Carlo}\label{sub:smc}

The resampling scheme we use in our SMC sampler is systematic resampling,
see Algorithm \ref{algo-sys}.

\begin{algorithm}[h]
\caption{Systematic resampling
\label{algo-sys}}
\begin{description}
\item[Input:] Normalised weights 
$W_t^j\eqdef w_t(\theta_{t-1}^j)/\sum_{i=1}^N w_t(\theta_{t-1}^i)$.
\item[Output:] indices $A^i\in\{1,\ldots,N\}$, for $i=1,\ldots,N$. 
\item[a.] Sample $U\sim \Unif$.
\item[b.] Compute cumulative weights as 
$C^n=\sum_{m=1}^n NW^m$.
\item[c.] Set $s\leftarrow U$, $m\leftarrow 1$. 
\item[d.] \textbf{For} $n= 1:N$
\item[]$\qquad$ \textbf{While} $C^m<s$ \textbf{do} $m\leftarrow m+1$. 
\item[]$\qquad$ $A^n\leftarrow m$, and $s\leftarrow s+1$. 
\item[]$\quad$ \textbf{End For}
\end{description}
\end{algorithm}

To move the particles while leaving invariant the current target $\post$, 
we use the standard random walk Metropolis strategy, but scaled to the 
current set of particles, as outlined by Algorithm \ref{algo-RW}. 

\begin{algorithm}[h]
\caption{Gaussian random walk Metropolis step
\label{algo-RW}}
\begin{description}
\item[Input:] $\theta$, $S$ ($d\times d$ positive matrix) 
\item[Output:] $\theta_\mathrm{next}$
\item[a.] Sample $\theta_\mathrm{prop}\sim \N(\theta,S)$.
\item[b.] Sample $U\sim \Unif$.
\item[c.] If $\log(U)\leq \log \postarg{\gamma}{\theta_\mathrm{prop}}/\post$, 
set $\theta_\mathrm{next}\leftarrow\theta_\mathrm{prop}$, otherwise
set $\theta_\mathrm{next}\leftarrow\theta$.
\end{description}
\end{algorithm}

\subsection{Expectation-Propagation (Gaussian prior)}\label{sub:ep}

EP aims at approximating posterior distributions of the form,
\[
\pi(\theta\vert \data)= \frac1{Z_\pi}P_0(\theta)\prod_{i=1}^nt_i(\theta)
\]
by approximating each site $t_i(\theta)$ by a distribution from an exponential family $q_i(\theta)$. The algorithm cycles through each site, computes the cavity distribution $Q^{\backslash i}(\theta)\propto Q(\theta)q_i^{-1}(\theta)$ and minimizes the Kullback-Leibler divergence between $Q^{\backslash i}(\theta)t_i(\theta)$ and the global approximation $Q(\theta)$. This is efficiently done by using properties of the exponential family (e.g. \cite{Bishop2006}). 

In the Gaussian case the EP approximation can be written as a product of some prior and a product of sites:
$$
Q(\theta)\propto \mathcal{N}(\theta;0,\Sigma)\prod_{i,j}q_{ij}(\theta),
$$
for which the sites are unnormalized Gaussians for the natural parametrization $q_{ij}(\theta)\propto\exp\left(-\frac12 \theta^T Q_{ij}\theta + \theta r_{ij} \right)$. We can equivalently use the one dimensional representation $q_{ij}(s_{ij})\propto\exp\left(-\frac12 s_{ij}^2 K_{ij} + s_{ij}h_{ij} \right)$, going from one to the other is easily done by multiplying $\theta$ by $(e_i-e_j)X$ where $\forall i\in\lbrace 1,\cdots,n\rbrace,\quad e_i$ is a vector of zeroes with one on the i-th line. Hence we keep in memory only $(K_{ij})_{ij}$ and $(h_{ij})_{ij}$.

While computing the cavity moment we must compute  $\left( Q-(X_i-X_j)(X_i-X_j)K_{ij}\right)$ and its inverse. The latter can be computed efficiently using Woodbury formula. Equivalently one could use similar tricks where only the Cholesky factorisation is saved and updated as in \cite{Seeger2005}. By precomputing some matrix multiplication the later cavity moment computation can be done in complexity $\OO(p^2)$.

To update the sites we compute normalizing constant $Z_{ij}=\int \mathcal{N}(s;m^{\backslash ij},\sigma^{\backslash ij})t_{ij}(s)\dd s$ and use properties of exponential families.
\begin{algorithm}[h]
\caption{parallel EP for Gaussian Prior
\label{algo-EP}}
\begin{description}
\item[Input:] $\vartheta$, $\gamma$ 
\item[Output:] $m$ and $V$  
\item[Init:] $V\leftarrow\Sigma$, $m\leftarrow 0$ 
\item[] \textbf{Untill} Convergence \textbf{Do}
\item[] \textbf{For} all sites $(i,j)$\textbf{Do} in parallel
\begin{itemize}
\item[a.] Compute the cavity moments $m^{\backslash ij}$, $V^{\backslash ij}$
\item[b.] Compute the 1st and 2nd order moments of $q^{\backslash ij}(s_{ij})t_{ij}(s_ij)$
\item[c.] Update $K_{ij}$ and $h_{ij}$
\end{itemize}
\item[] \textbf{End For}
\item[] Update $V=(\Sigma^{-1}+\sum_{ij}(X_i-X_j)^T(X_i-X_j) K_{ij})^{-1}$, $m=V(\sum_{ij}(X_i-X_j)h_{ij})$
\item[] \textbf{End While}
\end{description}
\end{algorithm}

\paragraph{Normalising Constant}

The normalizing constant of the posterior can be computed using EP. We have that for each sites $t_{ij}(\theta)=C_{ij}q_{ij}(\theta)$ we replace those sites in integral we wish to approximate,
$$
\int \mathcal{N}(\theta;0,\Sigma)\prod_{ij}t_{ij}(\theta)\dd \theta\simeq \prod_{ij} C_{ij} \int \mathcal{N}(\theta;0,\Sigma)\prod_{ij}q_{ij}(\theta)\dd \theta
$$
The integral on the right hand side is a Gaussian convolution and is therefore also Gaussian. The $C_{ij}$s can be approximated by matching the zeroth order moment in the site update. As noted in the paper we can also compute the derivatives with respect to some prior hyper-parameter (see \cite{Seeger2005}).   

\subsection{Expectation-Propagation (spike and slab prior)}

The posterior can be written as 
$$
\pi(\theta | \data)\propto\prod_{i,j}t_{ij}(\theta)\prod_{k=1}^d t_k(\theta_k,z_k)\mathcal{B}er(z_k;p),
$$
where $z_k\in\lbrace 0,1 \rbrace$ codes the origin of $\theta_k$, spike/slab, and where $t_k(\theta_k,z_k)\propto z_k\mathcal{N}(\theta_k;0,v_0)+(1-z_k)\mathcal{N}(\theta_k;0,v_1)$. The approximation given by EP is of the form,
$$
Q(\theta,z)\propto\prod_{i,j}q_{ij}(\theta)\prod_{k=1}^d q_k(\theta_k,z_k)\mathcal{B}er(z_k;p_k),
$$
where $q_k(\theta_k,z_k)\propto \mathcal{B}er(z_k,p_k) \mathcal{N}(\theta_k;m_k,\sigma_k^2)$, and $t_{ij}(\theta)$ is as in the previous section. The cavity moments are easy to compute as the approximation is Gaussian in $\theta$ and Bernoulli in $z$. In both cases we can deduce cavity moments because division is stable inside those classes of functions. We get some distribution $Q^{\backslash k}(\theta_k)\propto \mathcal{B}er(z_k;p^{\backslash k})\mathcal{N}(\theta_k;m^{\backslash k},\sigma^{2,\backslash k})$. We can compute the normalizing constant of the distribution $Q^{\backslash ij}(\theta)t_k(\theta_k,z_k)$, namely,
$$
Z_k=p^{\backslash k}\int \mathcal{N}(\theta_k;0,v_0)\mathcal{N}(\theta_k;m^{\backslash k},\sigma^{2,\backslash k})\dd \theta_k+(1-p^{\backslash k})\int \mathcal{N}(\theta_k;0,v_0)\mathcal{N}(\theta_k;m^{\backslash k},\sigma^{2,\backslash k})\dd \theta_k
$$
Where we can find the update by computing the derivatives of $\log Z_k$ with respect to $p^{\backslash k}$, $m^{\backslash k}$ and $\sigma^{2,\backslash k}$

Initialization for the Gaussian is done to a given $\Sigma_0$ that will be subtracted later on. The initial $p_k$s are taken such that the approximation equals the prior $p$ at the first iteration.


\section{Numerical illustration}

\begin{figure}[h]
\begin{center}
\caption{Comparison of the output of the two algorithms}
\begin{tabular}{lll}
\subfloat[1st covariate]{\includegraphics[scale=0.2]{graph/AUC1.pdf}} &
\subfloat[2nd covariate]{\includegraphics[scale=0.2]{graph/AUC2.pdf} } &
\subfloat[3rd covariate]{\includegraphics[scale=0.2]{graph/AUC3.pdf} } \\
\subfloat[4th covariate]{\includegraphics[scale=0.2]{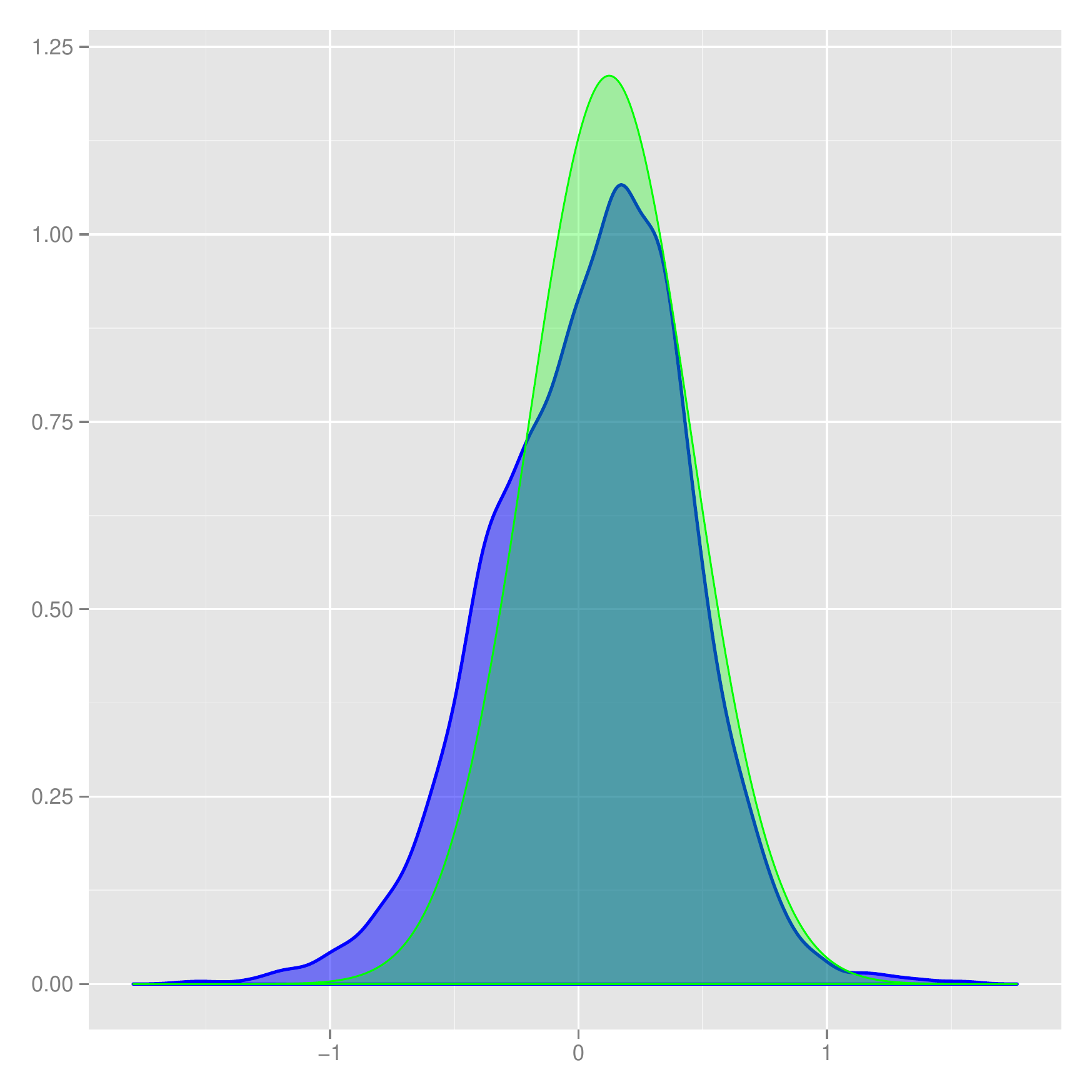} } &
\subfloat[5th covariate]{\includegraphics[scale=0.2]{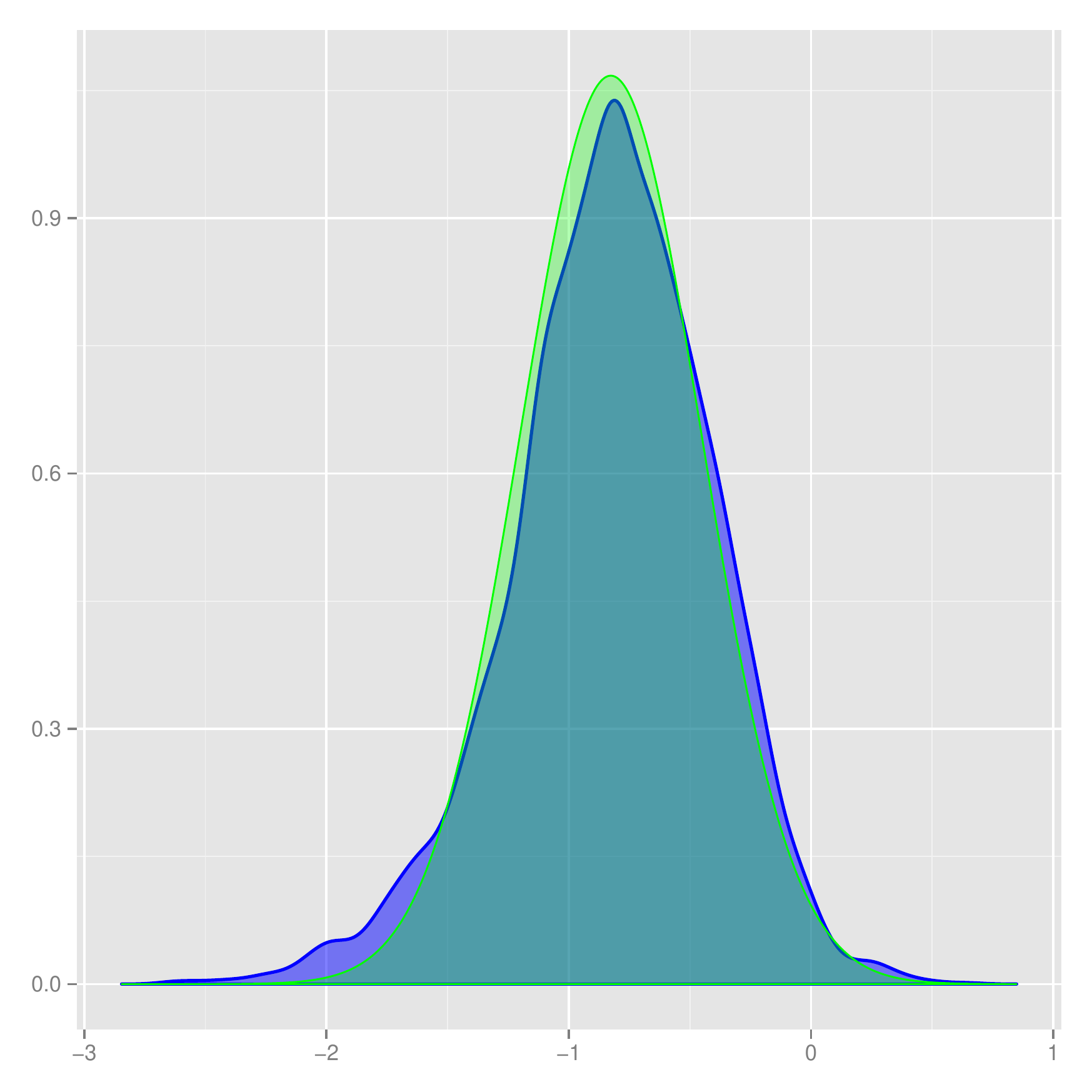} } &
\subfloat[6th covariate]{\includegraphics[scale=0.2]{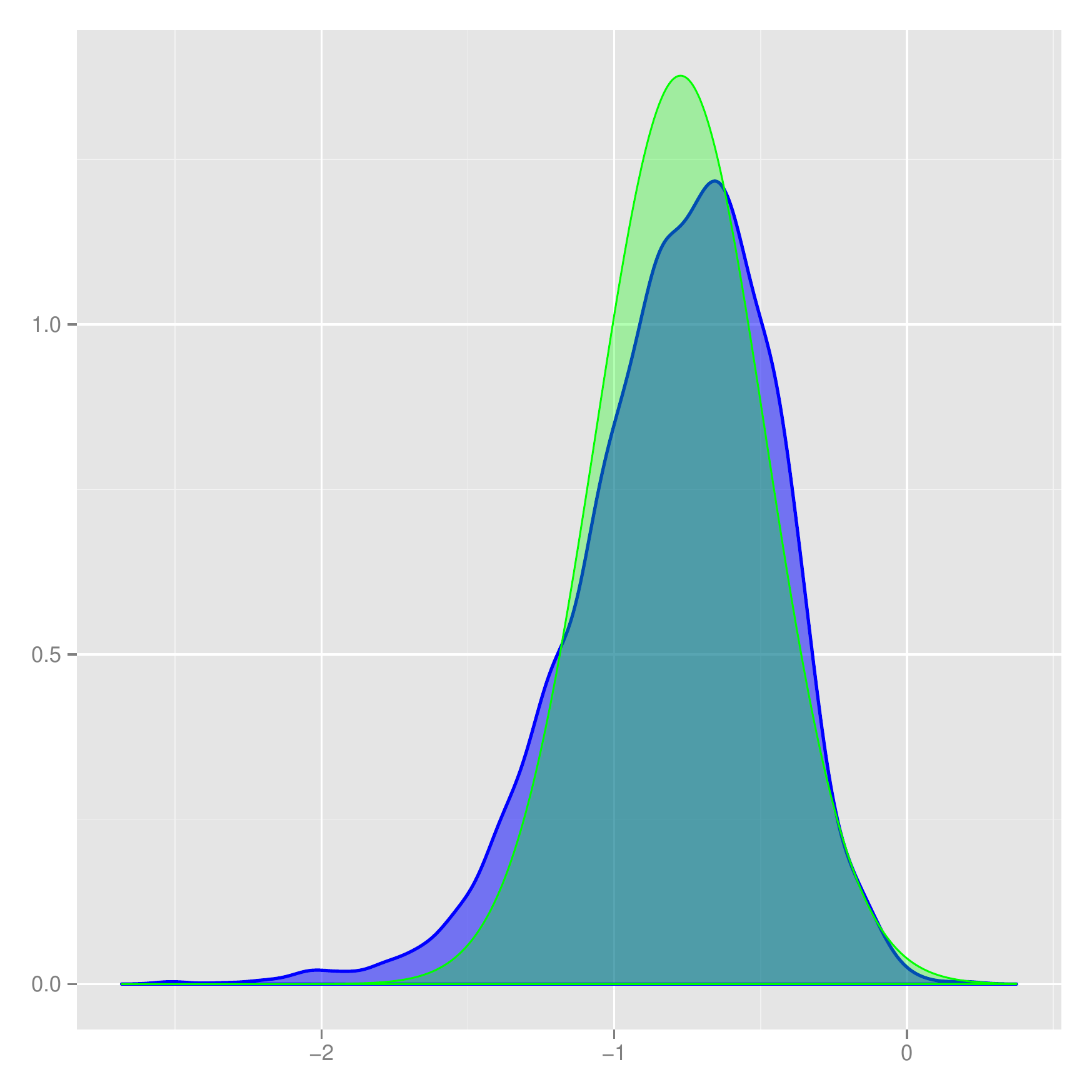} }\\
\subfloat[7th covariate]{\includegraphics[scale=0.2]{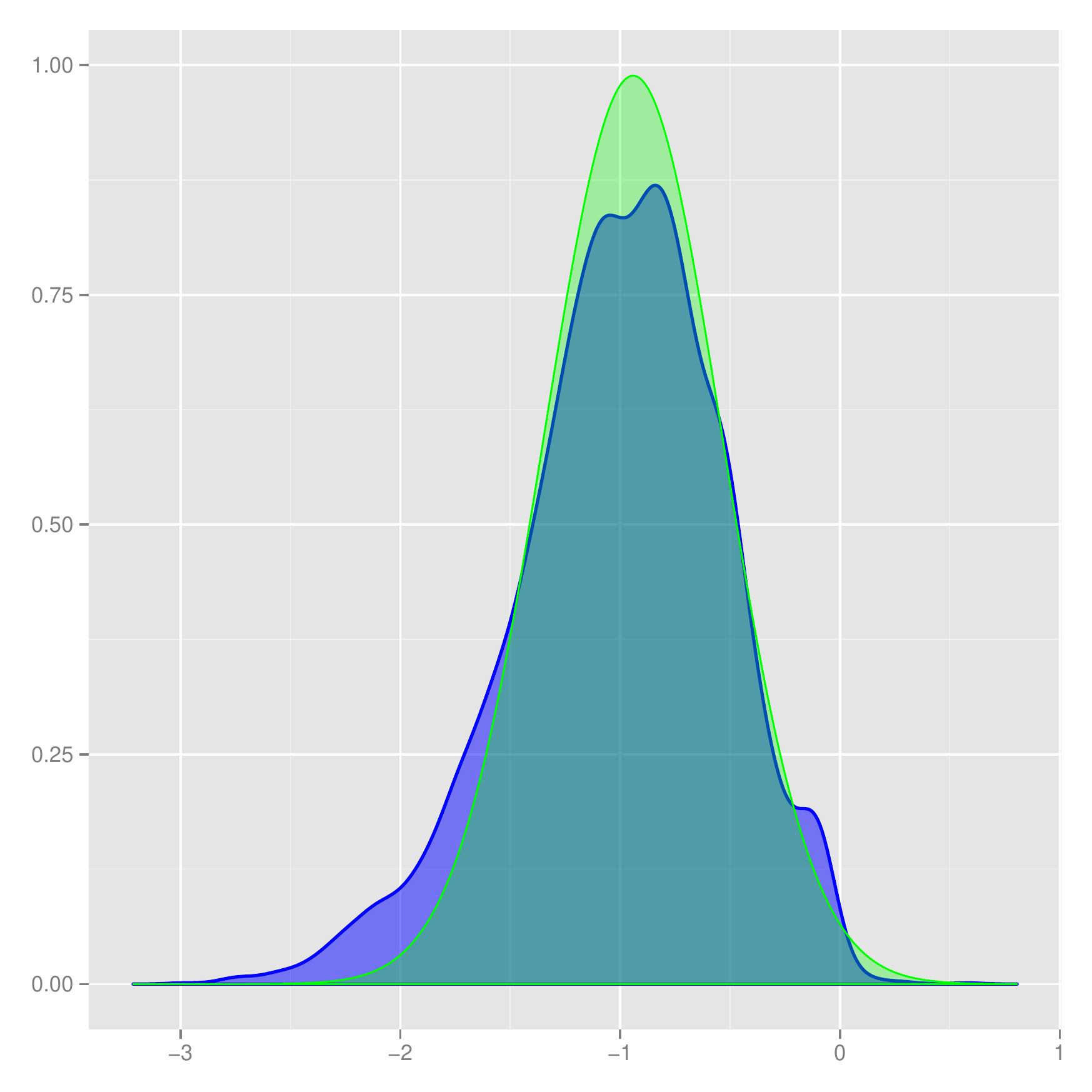} } &
\end{tabular}
\label{fig:density}
\begin{minipage}{15cm}
\footnotesize{Comparison of the Gaussian approximation  obtained by  Fractional EP (green) with the true density generated by  SMC (blue) on the Pima indians dataset}
\vspace*{3mm}
\hrule
\end{minipage}
\end{center}
\end{figure}

Figure \ref{fig:density} shows the posterior marginals as given by EP and tempering SMC. The later is exact in the sense that the only error stems from Monte Carlo; we see that the mode is well approximated however the variance is slightly underestimated. 

In Table \ref{CPU-table} we show the CPU times in seconds, on all dataset studied. Experiments where run with a i7-3720QM CPU @ 2.60GHz intel processor with 6144 KB cache. Our linear model is overall faster on those datasets. A caveat is that Rankboost is implemented in Matlab, while our implementation is in C. 

\begin{table}[h]
	\begin{center}
		\begin{tabular}{lccccc}
			\multicolumn{1}{c}{\bf Dataset}& \multicolumn{1}{c}{\bf Covariates}& \multicolumn{1}{c}{\bf Balance}&\multicolumn{1}{c}{\bf EP-AUC} & \multicolumn{1}{c}{\bf GPEP-AUC} & \multicolumn{1}{c}{\bf Rankboost}
			\\ \hline \\
			Pima  &7  & 34\% &  \bf 0.06 & 7.75  & 3.26\\
			Credit & 60& 28\% & \bf 1.98 & 7.59  & 56.54 \\
			DNA  &180 &  22\%    & \bf 11.26& 63.47 & 141.60 \\
			SPECTF  &22 &     50\%   & \bf 0.25   & 63.47 & 3.55\\
			Colon & 2000 & 40\% & 636.63 & \bf 60.99 & 156.85\\
			Glass & 10 & 1\% & \bf 0.23 & 1.33 & 2.36
		\end{tabular}
	\caption{Computation times in seconds}
	\label{CPU-table}
	\end{center}
\end{table}

\end{document}